\newtheorem{theorem}{Theorem}
\newtheorem{lemma}[theorem]{Lemma}
\newtheorem{definition}[theorem]{Definition}
\newtheorem{remark}[theorem]{Remark}
\newtheorem{fact}[theorem]{Fact}
\newtheorem*{theorem*}{Lemma}
\newcommand{\E}{\mathbb{E}}
\newcolumntype{P}[1]{>{\centering\arraybackslash}p{#1}}
\newcolumntype{g}{>{\columncolor[rgb]{0.89, 0.89, 1}}c}
\newcommand*{\tikzmk}[1]{\tikz[remember picture,overlay,] \node (#1) {};\ignorespaces}
\newcommand{\boxit}[1]{\tikz[remember picture,overlay]{\node[yshift=1pt,fill=#1,opacity=.12,fit={(A)($(B)+(\linewidth,.8\baselineskip)$)}] {};}\ignorespaces}
\colorlet{pink}{red!40}
\colorlet{blue}{cyan!60}
\colorlet{yellow}{yellow!50}
\newcommand{\newsym}[1]{\mbox{\footnotesize $\mathcal{#1}$}}
\newcommand{\cmark}{\ding{51}}%
\newcommand{\xmark}{\ding{55}}%
\newcommand{\lat}{\bm{\ell}}
\newtheorem{exmp}{Example}[section]
\crefname{section}{Sec.}{Secs.}
\Crefname{section}{Section}{Sections}
\Crefname{table}{Table}{Tables}
\crefname{table}{Tab.}{Tabs.}
\begin{document}

\title{Equivariance Allows Handling Multiple Nuisance Variables\\ When Analyzing Pooled Neuroimaging Datasets}


\author{{\normalsize \textbf{Vishnu Suresh Lokhande }}\\
	{\tt\footnotesize lokhande@cs.wisc.edu}
	\and
	{\normalsize \textbf{Rudrasis Chakraborty}}\\
	{\tt\footnotesize rudrasischa@gmail.com}
	\and
	{\normalsize \textbf{Sathya N. Ravi}} \\
	{\tt\footnotesize sathya@uic.edu}
	\and
	{\normalsize \textbf{Vikas Singh}} \\
	{\tt\footnotesize vsingh@biostat.wisc.edu}
	\and
}

\maketitle

\begin{abstract}
Pooling multiple neuroimaging datasets across institutions often enables improvements in statistical power when evaluating associations (e.g., between risk factors and disease outcomes) that may otherwise be too weak to detect. When there is only a {\em single} source of variability (e.g., different scanners), domain adaptation and matching the distributions of representations may suffice in many scenarios. But in the presence of {\em more than one} nuisance variable which concurrently influence the measurements, pooling datasets poses unique challenges, e.g., variations in the data can come from both the acquisition method as well as the demographics of participants (gender, age). Invariant representation learning, by itself, is ill-suited to fully model the data generation process. In this paper, we show how bringing recent results on equivariant representation learning (for studying symmetries in neural networks) instantiated on structured spaces together with simple use of classical results on causal inference provides an effective practical solution. In particular, we demonstrate how our model allows dealing with more than one nuisance variable under some assumptions and can enable analysis of pooled scientific datasets in scenarios that would otherwise entail removing a large portion of the samples. Our code is available on {\footnotesize \url{https://github.com/vsingh-group/DatasetPooling}}
\end{abstract}

\section{Introduction}
Observational studies in 
many disciplines
acquire cross-sectional/longitudinal 
clinical and imaging data 
to understand diseases such as neurodegeneration and  
dementia \cite{soldan2019atn}. Typically, these studies are 
sufficiently powered for the 
primary scientific 
hypotheses of interest. However, 
{\em secondary} analyses to investigate weaker but potentially 
interesting associations 
between risk factors (such as genetics) and disease 
outcomes 
are often difficult 
when using common 
statistical significance thresholds,  
due to the small/medium sample sizes. 

\label{sec:teaser_fig}
\begin{figure}[!t]
	\includegraphics[width=\columnwidth]{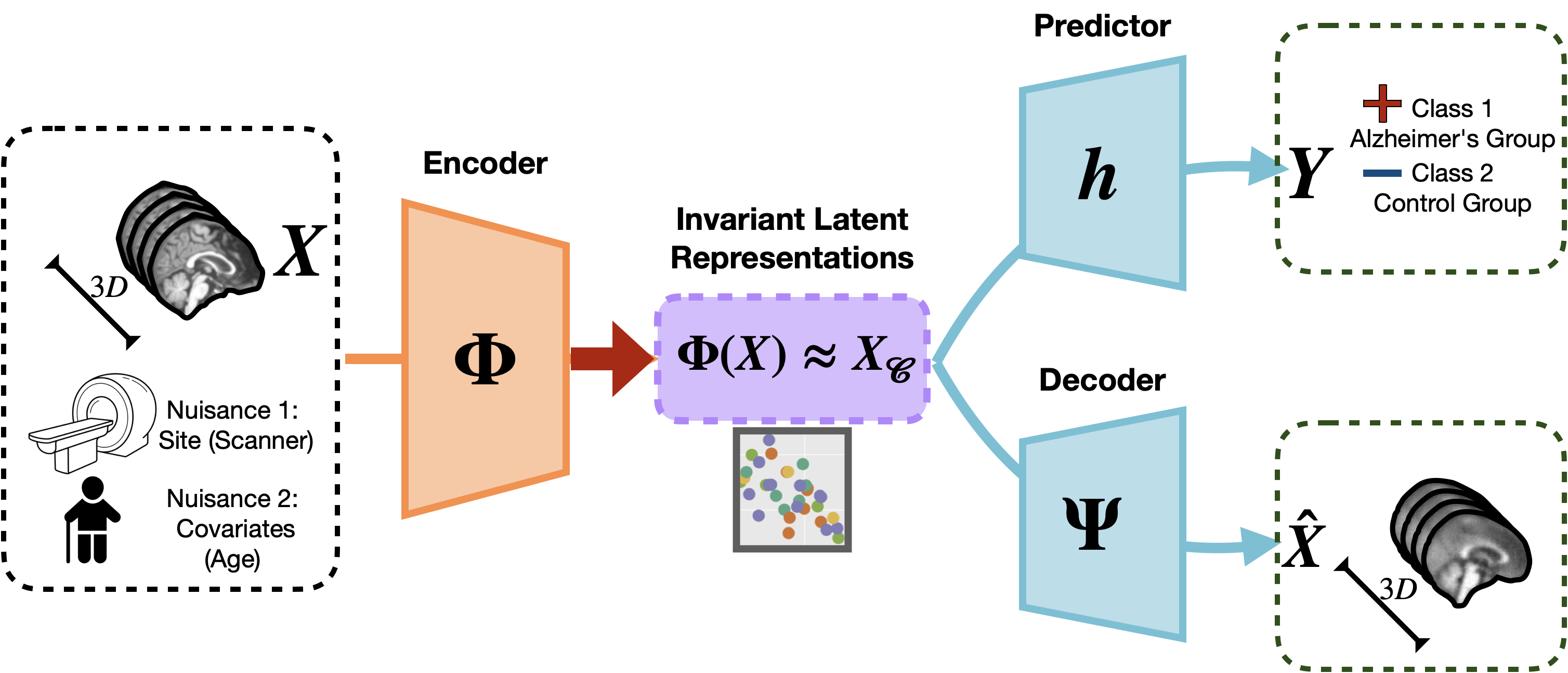}
	\centering
	\caption{\footnotesize \label{fig:encoder}  \textbf{Learning Invariant Representations.} In our framework, input images $X$ are pooled together from multiple sites. An encoder $\Phi$  maps $X$ to the latent representations $\Phi(X)$ that  corresponds to high-level causal features $X_\mathscr{C}$ that influences the label prediction. Unlike the input images $X$, $\Phi(X)$ is robust to nuisance attributes like site (scanner) and covariates (age). $\Phi$ is trained alongside predictor $h$ and decoder $\Psi$.}
	\vspace{-0.25in}
\end{figure}

\begin{figure*}[!t]
	\begin{subfigure}[b]{0.2\linewidth}
		\includegraphics[width=\linewidth]{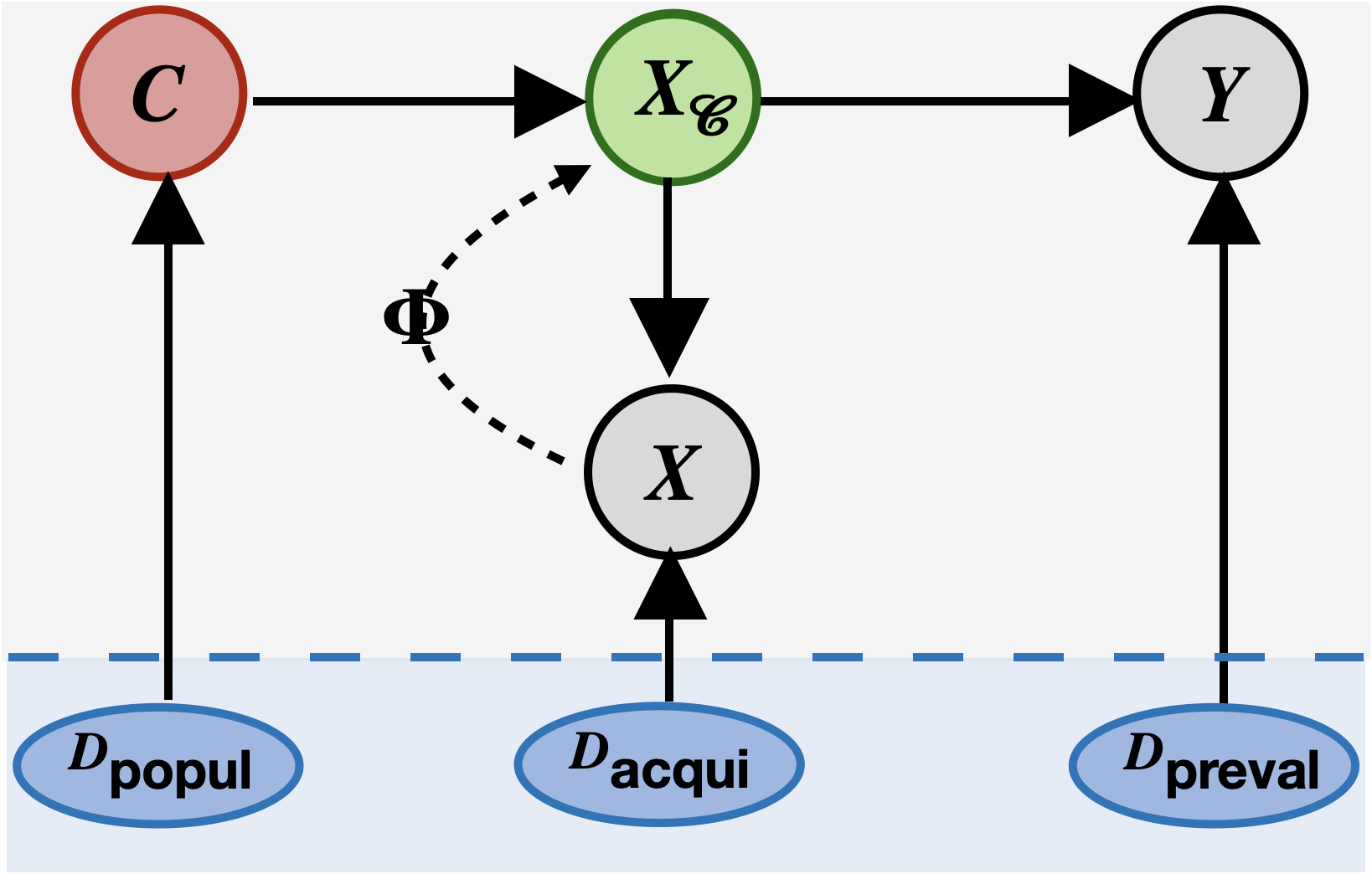}
		\caption{Causal Diagram}
		\label{fig:scm}
	\end{subfigure} \hspace{4mm} %
	\begin{subfigure}[b]{0.37\linewidth}
		\includegraphics[width=\linewidth]{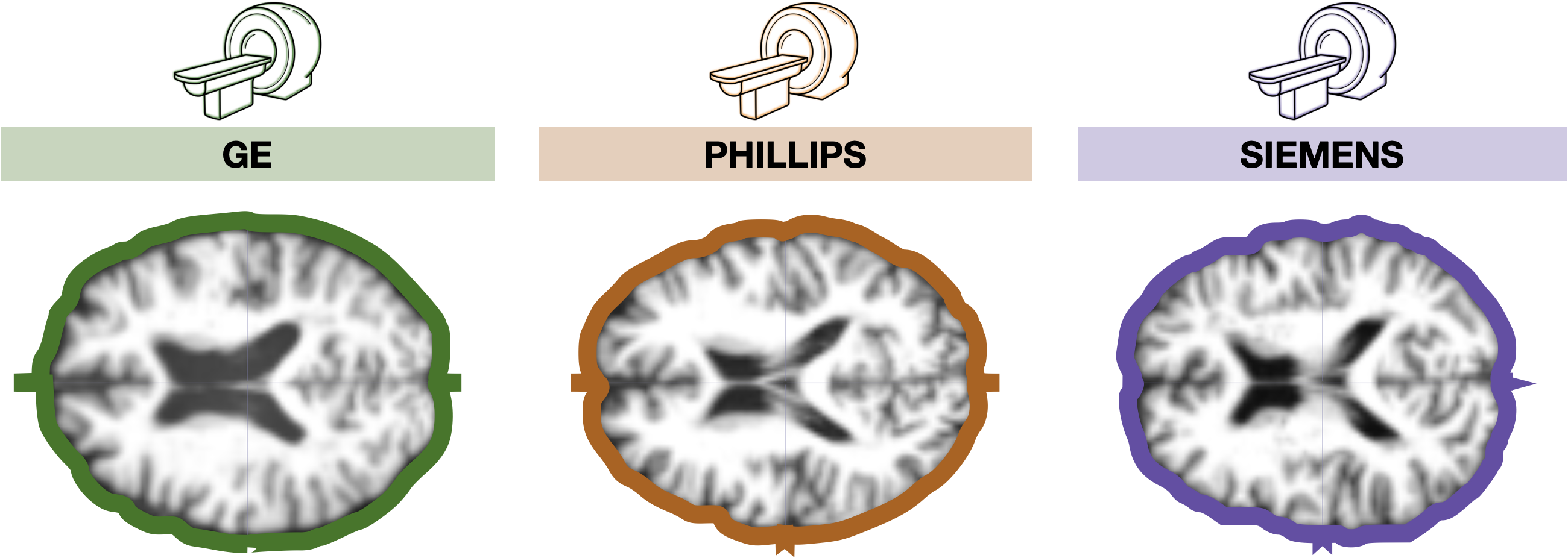}
		\caption{Variation due to site (scanner) for particular age group.}
		\label{fig:confounds_site}
	\end{subfigure} \hspace{4mm}%
	\begin{subfigure}[b]{0.37\linewidth}
		\includegraphics[width=\linewidth]{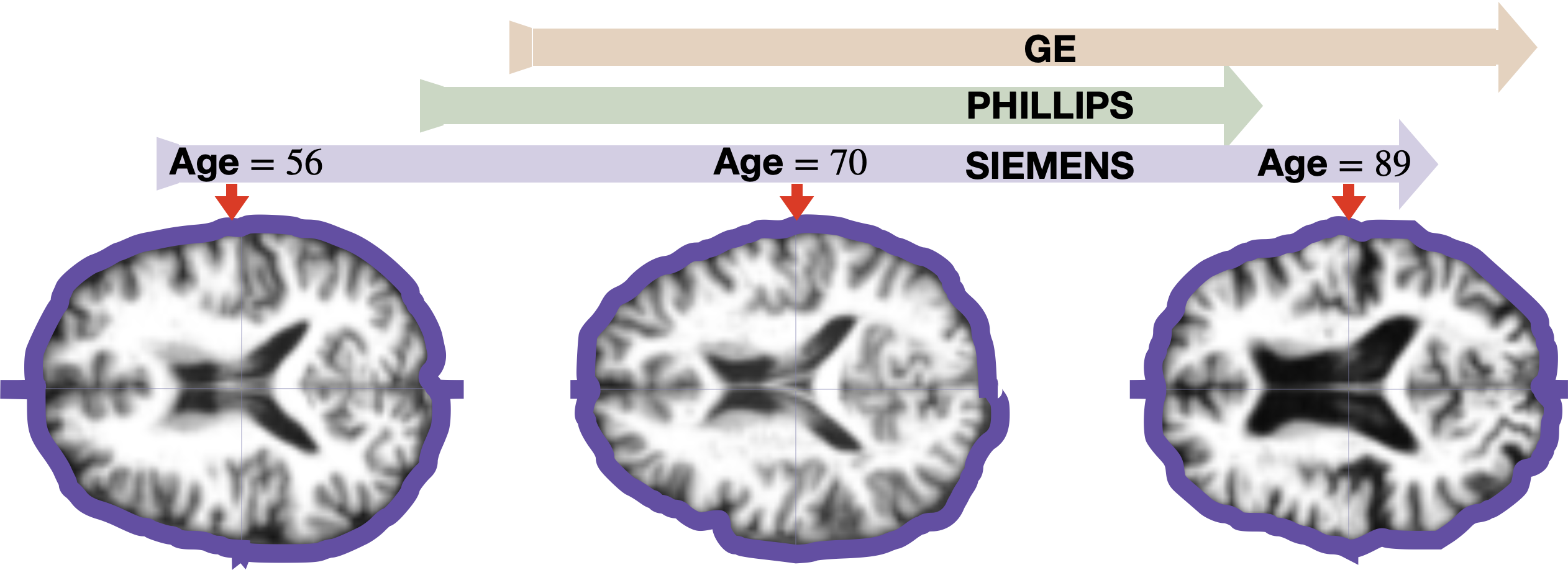}
		\caption{Variation due to covariates (age) in Siemens scanner.}
		\label{fig:confounds_covariate}
	\end{subfigure}%
	\centering
	\caption{\footnotesize \label{fig:scm_and_variations} \textbf{(a)} A \textbf{Causal Diagram } listing variable of interest and their relationship for multi-site pooling problem. Nodes $D_{\text{popul}}$, $D_{\text{acqui}}$ and $D_\text{preval}$ denote the population, acquisition and prevalence biases that vary across sites. $C$'s are covariates (like age or gender). $X_\mathscr{C}$ denotes the high-level causal features of an image $X$ that influences the labels $Y$. Nodes in red $d$-separate the nodes in  blue and green.  \textbf{(b)} MRI images on control subjects from the ADNI \cite{jack2008alzheimer} dataset for different \textbf{scanners} in the age group $70$-$80$.  \textbf{(c)} Images obtained from the Siemens scanner (i.e., fixing site) on control subjects for three extreme \textbf{age groups}. The gantt chart on top of the image indicates the respective age range in the Phillips and GE scanners. As observed,  different scanner groups do not share a common support on ``age" covariates, resulting in samples outside of the common support to be discarded in na\"ive pooling approaches.  }
		\vspace{-0.2in}
\end{figure*}

Over the last decade, there are coordinated 
large scale multi-institutional imaging studies 
(e.g., ADNI \cite{jack2008alzheimer}, NIH All of Us and HCP \cite{glasser2013minimal}) but the types of data collected 
or the project's scope 
(e.g., demographic pool of participants) may not be suited for 
studying specific 
{\em secondary} scientific questions. 
A ``pooled'' imaging dataset obtained from 
combining roughly 
similar studies across different institutions/sites, 
when possible, is an attractive 
alternative. The pooled datasets 
provide much larger sample sizes 
and improved statistical power 
to identify early disease bio-markers -- analyses 
which would not otherwise be possible 
\cite{luo2020sequence,fortin2017harmonization}.
But even when study participants are 
consistent across sites, pooling poses challenges. This is true even for linear regression \cite{pmlr-v70-zhou17c} -- 
improvement in statistical power is not always guaranteed. Partly due to these 
as well as other reasons, high visibility projects such as ENIGMA \cite{thompson2014enigma} have reported findings using  
meta-analysis methods. \\
{\bf Data pooling and fairness.} Even under ideal 
conditions, pooling imaging datasets across sites 
requires care. Assume that the participants 
across two sites, say $\text{site}_1$
and $\text{site}_2$, are perfectly gender matched with the same proportion 
of male/female and the age distribution 
(as well as the proportion of diseased/health controls) is also identical. 
In this idealized setting, 
the only difference between sites 
may come from variations in scanners or 
acquisition (e.g., pulse sequences). When 
training modern neural 
networks for 
a regression/classification 
task with imaging data obtained 
in this scenario, we may ask 
that the representations learned 
by the model be {\em invariant}
to the categorical variable 
denoting ``site''. While this is 
not a ``solved'' problem, this strategy has been successfully 
deployed based on 
results in invariant representation learning \cite{NEURIPS2018_415185ea,arjovsky2019invariant,akash2021learning} (see Fig. ~\ref{fig:encoder}).
One may alternatively view this task via the lens of 
fairness -- we want the model's 
performance to be fair with respect to the site variable. 
This approach is effective, via constraints \cite{zemel2013learning} or using adversarial modules \cite{zhang2018mitigating,ganin2016domain}. This setting also permits re-purposing tools from domain adaptation \cite{yang2021generalized,pandey2021domain,zhou2018statistical} or transfer learning \cite{dubey2021adaptive} as a pre-processing step, before analysis of 
the pooled data proceeds. \\
{\bf Nuisance variables/confounds.} Data pooling problems one often encounters in 
scientific research typically violates many of the 
conditions in the aforementioned example. 
The measured data $X$ at each site 
is influenced not only by the scanner properties 
but also by a number of other covariates / nuisance variables. 
For instance, if the age distribution of participants 
is not identical across sites, comparison 
of the site-wise distributions is challenging 
because it is influenced {\bf both} by age and the scanner. An example of the differences introduced due to age and scanner biases is shown in Figures~\ref{fig:confounds_site},~\ref{fig:confounds_covariate}. With multiple nuisance variables, even  effective tools for invariant representation 
learning, when used directly, 
can provide limited help. The data generation process, 
and the role of covariates/nuisance variables, available via a causal diagram (Figure~\ref{fig:scm}), can inform how the formulation is designed \cite{bareinboim2016causal,subbaswamy2019preventing}. Indeed, concepts 
from causality have benefited 
various deep learning models \cite{scholkopf2021toward,peters2017elements}. Specially, recent work \cite{mahajan2020domain} has shown 
the value of integrating structural causal models for 
domain generalization, which is  related 
to dataset pooling.\\
{\bf Causal Diagram. }Dataset pooling under completely arbitrary settings 
is challenging to study systematically. So, we assume that the site-specific imaging datasets are {\em not} significantly different to begin with, although the distributions for covariates 
such as age/disease prevalence may not be perfectly matched and each of these factors will influence the data. 
We assume access to a causal diagram describing 
how these variables influence the measurements. 
We show how the 
distribution matching criteria provided 
by a causal diagram can be nicely handled for some 
ordinal covariates that are not perfectly 
matched across sites by adapting 
ideas from equivariant 
representation learning. \\   
{\bf Contributions.} We propose a method to pool multiple neuroimaging datasets by learning representations that are robust to site (scanner) and covariate (age) values (see Fig.~\ref{fig:encoder} for visualization). We show that continuous 
nuisance covariates which do not have the same support and are not identically distributed across sites, can be 
effectively handled 
when learning invariant representations. 
We do not 
require finding ``closest match'' participants across sites -- a strategy loosely based on 
 covariate matching  \cite{rosenbaum1985constructing} from 
statistics which is less 
feasible if the 
distributions for a covariate (e.g., age) do not closely overlap.
Our model is based on adapting recent 
results on 
equivariance together with 
known 
concepts from group theory. 
When 
tied  
with common 
invariant representation learners, 
our formulation allows far improved 
analysis of pooled imaging datasets. We first 
perform evaluations on common 
fairness datasets and then show its applicability 
on two separate neuroimaging tasks 
with multiple nuisance variables.

\section{Reducing Multi-site Pooling to Infinite Dimensional Optimization}
\label{sec:method_intro}


Let $X$ denote an image 
of a participant 
and let $Y$ be the corresponding 
(continuous or discrete) response variable or target label (such as cognitive score or disease status). For simplicity, consider only two sites -- site$_1$
and site$_2$. Let $D$ represent the site-specific shifts, biases or covariates that we want to take into account. 
One possible data generation process relating these variables is shown in Figure~\ref{fig:scm}. 

{\bf Site-specific biases/confounds.} 
Observe that 
$Y$ is, in fact, influenced 
by high-level (or latent) features $X_\mathscr{C}$ specific to the participant. The images (or image-based disease biomarkers) $X$ are simply 
our (lossy) measurement of 
the participant's brain   
$X_\mathscr{C}$ \cite{eskildsen2015structural}. Further, $X$ also includes 
an (unknown) confound: contribution from the scanner (or acquisition protocol).  Figure~\ref{fig:scm} also lists covariates $C$, such as age and other factors which impact $X_\mathscr{C}$ (and 
therefore, $X$). A few common site-specific biases $D$ are shown in Fig.~\ref{fig:scm}. 
These 
include 
\begin{inparaenum}[\bfseries (i)]
\item \textit{ population bias }$D_{\rm popul}$ that leads to differences in age or gender distributions of the cohort \cite{castro2020causality}; 
\item we must also account for \textit{acquisition shift }$D_{\rm acqui}$ resulting from different scanners or imaging protocols -- this 
affects $X$ but not $X_\mathscr{C}$;  
\item data are also 
influenced by a \textit{class prevalence bias }$D_{\rm preval}$, e.g.,  healthier individuals over-represented in site$_2$ 
will impact the distribution of 
cognitive scores across sites. 
\end{inparaenum}

For imaging data, in 
principle, 
site-invariance can be achieved via an encoder-decoder style architecture to map the images $X$ into a
``site invariant'' latent space $\Phi(X)$. 
Here, $\Phi(X)$ in the idealized setting, 
corresponds to the true 
``causal'' features $X_\mathscr{C}$ that is comparable across sites. 
In practice, 
we know that images cannot 
fully capture the disease -- so, $\Phi(X)$
is simply a surrogate, limited by the 
measurements we have on hand. 
Given these caveats, 
an architecture is shown in Fig.~\ref{fig:encoder}. Ideally, the encoder will minimize Maximum Mean Discrepancy (MMD) \cite{gretton2006kernel} or another discrepancy 
between the distributions 
of latent representations $\Phi(\cdot)$ of the data from site$_1$ and site$_2$. 

{The site-specific attributes $D$ are often {\bf unobserved} or otherwise 
unavailable. For instance, we may 
not have full access to 
$D_{\rm popul}$ from which our 
participants are drawn. 
To tackle these issues, we use a {causal diagram}, see Fig.~\ref{fig:scm}, similar to existing works \cite{mahajan2020domain,zhou2018statistical} with minimal changes. For dealing with unobserved $D$'s, some standard approaches are known \cite{hardtrecht}. Let us see how it can help here. 
Applying $d$-separation (see \cite{pearl2016causal,hardtrecht} ) on  Fig.~\ref{fig:scm}, we see that the nodes $(D_{\text{popul}}, C, X_{\mathscr{C}})$ form a so-called ``head-to-tail'' branch and the nodes $(D_{\text{acqui}}, X, X_{\mathscr{C}})$, $(D_{\text{preval}}, Y, X_{\mathscr{C}})$ form a ``head-to-head'' branch. This implies that $X_\mathscr{C} \perp\!\!\!\perp D \mid C$. This is exactly an invariance condition: $X_\mathscr{C}$ should not change across different sites for samples with the {\bf same value} of $C$. 
To enforce this using $\Phi(\cdot)$, we must 
optimize a discrepancy between site-wise $\Phi(X)$'s at a given value of $C$, }
%
%
%
\begin{align}
	\label{eq:anchor2}
	\min_{ \Phi} \mathcal{MMD}\bigg(P_{\text{site}_1}\Big(\Phi(X) \mid C\Big), P_{\text{site}_2}\Big(\Phi(X) \mid C\Big)\bigg)
\end{align}
{
 
{\bf Provably solving \eqref{eq:anchor2}?} A brief comment on the difficulty 
 of the distributional optimization in \eqref{eq:anchor2} is useful. 
 Generic tools for (worst case) convergence rates for such problems are actively being developed \cite{yang2020variational}. For the average case, \cite{qi2020online} presents an online method for a specific class of  (finite dimensional) distributionally robust optimization problems that can be defined using standard divergence measures. Observe that even these convergence guarantees are local in nature, i.e., they output a point that satisfies necessary conditions and may not be sufficient. 
 }

In practice, the outlook is a little better. Intuitively, an optimal {\em matching} of the conditional distributions $P(\Phi(X) \mid C)$ across the two sites corresponds to a (globally) optimal solution to the probabilistic optimization task in \eqref{eq:anchor2}. 
Existing works show that 
it is indeed possible to approach 
this {\em computationally}
via sub-sampling methods \cite{zhou2018statistical} or by learning elaborate matching functions to identify image or object pairs across sites 
that are ``similar'' \cite{mahajan2020domain} or 
have the same value for $C$. Sub-sampling, 
by definition, reduces the number of samples from the two sites by discarding samples outside of the common support.
This impacts the quality of the estimator -- 
for instance, \cite{zhou2018statistical} must restrict 
the analysis only to that age range of $C$ which overlaps or is shared across sites. Such discarding of samples 
is clearly undesirable when each 
image acquisition is expensive.  { Matching functions also do not work if the support of $C$ is not identical across the sites, as briefly described next. \begin{exmp}
		Let $C$  denote 
		an observed covariate, e.g., age.
		Consider $X_i$ at $\text{site}_1$ 
		with $C=c_1$ and 
		$X_j$ at $\text{site}_2$ 
		with $C=c_2$. 
		If $c_1 \approx c_2$, 
		a matching will seek 
		$\Phi(X_i) \approx \Phi(X_j)$ in $X_\mathscr{C}$ space. If $c_1$ falls outside the support of 
		$c$'s acquired at $\text{site}_2$, one must 
		not only estimate $\Phi(\cdot)$ but also a transport expression $\Gamma_{c_2\to c_1}(\cdot)$ on 
		$X_\mathscr{C}$ such that 
		$\Phi(X_i) \approx \Gamma_{c_2\to c_1} (\Phi(X_j))$. The ``transportation'' involves estimating 
		what a latent image 
		acquired at age $c_2$ would 
		look like at age $c_1$. This means that matching would need a solution to the key difficulty, obtained further upstream.
\end{exmp}}


\begin{figure*}[!t]
	\includegraphics[width=0.7\textwidth]{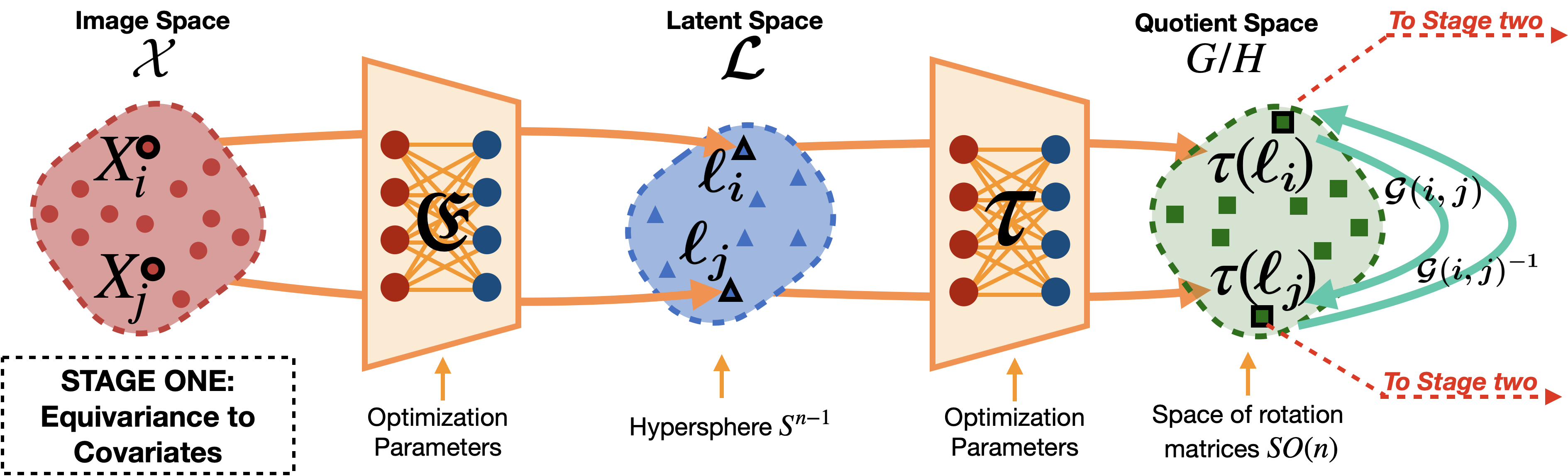}
	\centering
	\caption{\footnotesize {\bf Visualization of Stage one.}	\label{fig:tau_algo}  First, an image pair $X_i , X_j$ are mapped onto a hypersphere using an encoder $\mathfrak{E}$. The resulting pair $\lat_i , \lat_j$  are passed through  $\tau$ network to map them into the space of rotation matrices (which is the quotient group denoted by $G/H$). Fact~\ref{fact_eqv} ensure that $\tau$ is a $G=SO(n)-$equivariant map. $\mathcal{G}({i, j}) / \mathcal{G}({i, j})^{-1}$ is the group action of transforming $\tau(\lat_i)$ to $\tau(\lat_j) /  \tau(\lat_j)$ to $\tau(\lat_i)$ respectively.  } 
	\vspace{-0.2in}
\end{figure*}

\subsection{Improved Distribution Matching via Equivariant Mappings may be possible}

Ignoring $Y$ for the moment, recall that matching here corresponds to a bijection between unlabeled (finite) conditional distributions. Indeed, if the conditional distributions take specific forms such as a Poisson process, it is indeed possible to use simple matching algorithms that only require access to pairwise ranking information on the corresponding empirical distributions \cite{shah2017simple}, for example, the well-known Gale-Shapley algorithm \cite{teo2001gale}. Unfortunately, in applications that we consider here, such distributional assumptions may not be fully faithful with respect to site specific covariates $C$.  In essence,  we want representations $\Phi$ (when viewed as a function of $C$) that vary in a predictable (or say deterministic) manner 
 -- if so, we can 
avoid matching altogether and instead 
match a suitable property of the site-wise distributions of the representation $\Phi(X)$. 
We can make this criterion more specific. 
We want the site-wise distributions 
to vary in a manner where the ``trend'' is consistent across the sites. Assume that 
this were not true, say $P(\Phi(X)\mid C)$ is continuous and monotonically increases with $C$ for site$_1$ but monotonically decreases for site$_2$. A match of $P(\Phi(X)\mid C)$ across the sites at a particular value of $C=c$ implies at least one $C=c^\prime$ where $P(\Phi(X)\mid C)$ do not match. The monotonicity argument 
is weak for high dimensional 
$\Phi$. Plus, we have multiple nuisance 
variables. It turns out that our requirement for  $P(\Phi(X)\mid C)$ to vary in a predictable manner across sites can be handled using the idea of equivariant mappings, i.e., $P(\Phi(X)\mid C)$ must be {\bf equivariant} with respect to $C$ for both  sites. In addition, we will also seek 
{\bf invariance} to scanner attributes.

{ 
While we are familiar with the well-studied notion of invariance through the MMD criterion \cite{li2014learning}, we will briefly formalize our idea behind an equivariant mapping which is less common in this setting.
\begin{definition}
	\label{def:equivariance}
	A mapping $f: \mathcal{X} \rightarrow \mathcal{Y}$ defined over measurable Borel spaces $\mathcal{X}$ and $\mathcal{Y}$ is said to be $G-$equivariant under the action of group $G$ iff 
	$$ f(g \cdot x) = g \cdot f(x), \quad g \in G$$
\end{definition}
We refer the reader to two recent surveys, Section~$2.1$ of \cite{bloem2020probabilistic} and Section~$3.1$ of \cite{bronstein2021geometric}, which provide a detailed review.}

{ 
Equivariance is often understood in the context of a group action (say, a matrix group) \cite{hunacek2005lie,knapp1996lie}. While the covariates $C$ is a vector (and every vector space is an abelian group), 
since this group will eventually act on the latent space of our images, imposing 
additional structure will be beneficial. 
To do so, we will utilize 
a mapping between $C$'s and a 
group suitable for our setting. Once this is accomplished, we will derive an equivariant encoder. We discuss these
steps next. 
}

\section{Methods}
\label{sec:methods}
The dual goals of 
\begin{inparaenum}[\bfseries (i)]
\item equivariance to covariates (such as age)  and 
\item invariance to site, 
\end{inparaenum}
involves learning multiple mappings. For simplicity, and to keep the computational effort manageable, we divide our method into two stages. Briefly, our stages are 
\begin{inparaenum}[\bfseries (a)] 
	\item \textbf{Stage one: Equivariance to Covariates. } We learn a mapping to a space that provides the essential flexibility to characterize changes in covariate $C$ as a group action.  This enables us to construct a space satisfying the equivariance condition as per Def.~\ref{def:equivariance}
	\item \textbf{Stage two: Invariance to Site. } 
	We learn a second encoding to a generic vector space by apriori ensuring that the equivariance properties from Stage one are preserved. Such an encoding is then tuned to optimize the MMD criterion, thus generating a latent space that is invariant to site while remaining equivariant to covariates. 
\end{inparaenum}
We describe these stages one by one in the following sections.  



\subsection{Stage one: Equivariance to Covariates}
\label{sec:stage_one}
Given the space of images, $\mathcal{X}$,  with the covariates $C$, first, we want to characterize the effect of $C$ on $\mathcal{X}$ as a group action for some group $G$.  Here, an element $g\in G$ characterizes the change from covariate $c_i\in C$ to $c_j\in C$ (for short, we will use $i$ and $j$). The change in $C$ corresponds to a translation action which is difficult to instantiate in  $\mathcal{X}$ without invoking expensive conditional generative models. Instead, we propose to learn a mapping to a latent space $\mathcal{L}$ such that the change in $C$ can be characterized by a group action pertaining to $G$ in the space $\mathcal{L}$ (the latent space of $\mathcal{X}$). As an example, let us say $X_i$ goes to $X_j$ in $\mathcal{X}$ as $(X_i \to X_j)$. This means that $(X_i \to X_j)$ is caused due to the covariate change $(c_i \to c_j)$ in $C$. Let $\mathfrak{E}$ be a mapping between the image space $\mathcal{X}$ and the latent space $\mathcal{L}$. In the latent space $\mathcal{L}$, for the moment, we want that $(\mathfrak{E}X_i \to \mathfrak{E}X_j)$ should correspond to the change in covariate 
$(c_i \to c_j)$.

{ 
\begin{remark}
	\label{lab:key_difficulty}We are mostly interested in normalized covariates for example, in $\ell_p$ norm, while other volume based deterministic normalization functions may also be applicable. In the simplest case of $p=2$ norm, the corresponding group action is naturally induced by the matrix group of rotations.
\end{remark}
Based on this choice of group, we will learn an autoencoder $\left(\mathfrak{E}, \mathfrak{D}\right)$ with an encoder $\mathfrak{E}: \mathcal{X} \rightarrow \mathcal{L}$ and a decoder $\mathfrak{D}: \mathcal{L} \rightarrow \mathcal{X}$, here $\mathcal{L}$ is the encoding space.}  Due to Remark~\ref{lab:key_difficulty}, we can choose $\mathcal{L}$ to be a hypersphere, $\mathbf{S}^{n-1}$, and $\left(\mathfrak{E}, \mathfrak{D}\right)$ as a hyperspherical autoencoder \cite{zhao2019latent}. Then, we can characterize the ``action of $C$ on $\mathcal{X}$'' as the action of $G$ on $\mathbf{S}^{n-1}$. That is to say that a covariate change (translation in $C$) is a change in angles on $\mathcal{L}$. This corresponds to a rotation due to the choice of our group $G$. Note that for $\mathcal{L} = \mathbf{S}^{n-1}$, $G$ is the space of $n\times n$ rotation matrices, denoted by  $\textsf{SO}(n)$, and the action of $G$ is well-defined.  What remains is to encourage the latent space $\mathcal{L}$ to be $G$-equivariant. We start with some group theoretic properties that will be useful. 

\subsubsection{Review: Group  theoretic properties of $\textsf{SO}(n)$ }
\label{sec:review}
Let $\textsf{SO}(n) = \left\{X\in \mathbf{R}^{n\times n} | X^TX = I_n, \text{det}(X) = 1\right\}$ be the group of $n\times n$  special orthogonal matrices. The group $\textsf{SO}(n)$ acts on $\mathbf{S}^{n-1}$ with the group action ``$\cdot$'' given by $g\cdot \lat \mapsto g \lat$, for $g\in \textsf{SO}(n)$ and $\lat\in \mathbf{S}^{n-1}$. Here we use $g\lat$ to denote the multiplication of matrix $g$ with $\lat$. Under this group action, we can identify $\mathbf{S}^{n-1}$ with the quotient space $G/H$ with $G = \textsf{SO}(n)$ and $H = \textsf{SO}(n - 1)$ (see Ch.~$3$ of \cite{Dummit_Foote_2004} for more details). 
Let $\tau: \mathbf{S}^{n-1} \rightarrow G/H$ be such an identification, i.e., $\tau(\lat) = gH$ for some $g\in G$. The identification $\tau$ is equivariant to $G$ in the following sense.
\begin{fact}
\label{fact_eqv}
Given $\tau: \mathbf{S}^{n - 1} \rightarrow G/H$ as defined above, $\tau$ is equivariant with the action of $G$, i.e., $\tau\left(g\cdot \lat\right) = g\tau(\lat)$.
\end{fact}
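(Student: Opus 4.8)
The plan is to exploit the fact that $\tau$ arises from the transitive action of $G = \textsf{SO}(n)$ on the sphere, so that equivariance collapses to the associativity of matrix multiplication. I would first fix a base point $\lat_0 \in \mathbf{S}^{n-1}$, say $\lat_0 = e_n = (0,\dots,0,1)^T$. Because $\textsf{SO}(n)$ acts transitively on $\mathbf{S}^{n-1}$ (any unit vector can be rotated onto $e_n$), for every $\lat \in \mathbf{S}^{n-1}$ there exists some $g \in G$ with $g\lat_0 = \lat$, and the identification is given by $\tau(\lat) = gH$ for such a $g$.

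Second, I would check that $\tau$ is well defined, i.e. independent of the chosen representative $g$. The key observation is that $H = \textsf{SO}(n-1)$ is precisely the stabilizer of $\lat_0$ under this action: these are exactly the rotations fixing $e_n$ and acting only on the orthogonal complement $\mathbf{R}^{n-1}$. Hence if $g_1 \lat_0 = g_2 \lat_0 = \lat$, then $g_2^{-1} g_1 \in \mathrm{Stab}(\lat_0) = H$, so $g_1 H = g_2 H$. Establishing this identification of the stabilizer with $H$ is the only step carrying genuine content; everything after it is formal.

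Third, the equivariance becomes a one-line computation. Recalling that $G$ acts on $G/H$ by left multiplication, let $g' \in G$ and $\lat \in \mathbf{S}^{n-1}$ with $\tau(\lat) = gH$ (so that $g\lat_0 = \lat$). Then $g' \cdot \lat = g' g \lat_0 = (g'g)\lat_0$, and therefore by definition of $\tau$ we obtain $\tau\left(g' \cdot \lat\right) = (g'g)H = g'\left(gH\right) = g'\,\tau(\lat)$. The only ingredients used are associativity of the group operation (matrix multiplication) and the definition of the left action on cosets.

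The main, and indeed only, obstacle is the well-definedness in the second step, which rests on correctly identifying $H = \textsf{SO}(n-1)$ as the stabilizer of the base point; once $\tau$ is set up coherently, the equivariance is automatic for any homogeneous space presented via the orbit-stabilizer correspondence.
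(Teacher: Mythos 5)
Your proof is correct. The paper itself gives no proof of this Fact---it is asserted as standard, with a pointer to the coset-space identification in Dummit--Foote---and your argument (fix the base point $e_n$, use transitivity of $\textsf{SO}(n)$ on $\mathbf{S}^{n-1}$, establish well-definedness of $\tau(\lat)=gH$ via $\mathrm{Stab}(e_n)=\textsf{SO}(n-1)=H$, then reduce equivariance to associativity of left multiplication on cosets) is exactly the standard orbit--stabilizer construction that the paper's citation presupposes, including your correct identification of well-definedness as the only step with real content.
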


Next, we see that given two points $\lat_i, \lat_j$ on $\mathbf{S}^{n-1}$ there is a unique group element in $G$ to move from $\tau(\lat_i)$ to  $\tau(\lat_j)$.  
\begin{lemma}
\label{lemma_eqv}
Given two latent space representations $\lat_i, \lat_j \in \mathbf{S}^{n-1}$, and the corresponding cosets $g_iH = \tau(\lat_i)$ and $g_jH = \tau(\lat_j)$, $\exists! g_{ij} = g_jg_i^{-1}\in G$ such that $\lat_j = g_{ij}\cdot \lat_i$.
\end{lemma}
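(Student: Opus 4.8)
The plan is to lean entirely on the two structural facts just established: that $\tau$ is a $G$-equivariant \emph{bijection} between $\mathbf{S}^{n-1}$ and the coset space $G/H$ (Fact~\ref{fact_eqv}), and that $G=\textsf{SO}(n)$ acts \emph{transitively} on $G/H$ by left translation $g\cdot(g'H)=(gg')H$. The existence half of the claim is then a one-line verification, while the genuinely delicate point is the meaning of ``$\exists!$'', because the $\textsf{SO}(n)$-action on $\mathbf{S}^{n-1}$ is transitive but \emph{not} free once $n\ge 3$.

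For existence, I would simply exhibit the candidate $g_{ij}=g_jg_i^{-1}$ and check that it does the job by pushing it through $\tau$. Using $G$-equivariance (Fact~\ref{fact_eqv}) followed by the definition of the coset action,
\begin{equation*}
\tau\!\left(g_{ij}\cdot \lat_i\right)=g_{ij}\,\tau(\lat_i)=\bigl(g_jg_i^{-1}\bigr)g_iH=g_jH=\tau(\lat_j).
\end{equation*}
Since $\tau$ is injective (it is an identification, hence a bijection), I can cancel $\tau$ to conclude $g_{ij}\cdot\lat_i=\lat_j$. This is the content of the construction that Stage one relies on, and it requires nothing beyond Fact~\ref{fact_eqv}.

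The uniqueness statement is where care is needed, and I expect it to be the main obstacle. The stabilizer of any point $\lat_i=g_i\cdot\lat_0\in\mathbf{S}^{n-1}$ (with $\lat_0$ the base point of the identity coset) is the conjugate subgroup $g_iHg_i^{-1}\cong\textsf{SO}(n-1)$, which is nontrivial for $n\ge3$; hence literally ``the'' element carrying $\lat_i$ to $\lat_j$ cannot be unique. What \emph{is} unique is the element modulo this stabilizer: by the orbit--stabilizer correspondence, the full set of $g$ with $g\cdot\lat_i=\lat_j$ is the single coset $g_jHg_i^{-1}$, of which $g_jg_i^{-1}$ is the representative singled out by the chosen coset representatives $g_i,g_j$. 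I would therefore make ``$\exists!$'' precise by fixing, once and for all, a section $s:G/H\to G$ of the quotient map (equivalently, committing to the specific representatives $g_i,g_j$ that the $\tau$ network emits): relative to that section, $g_{ij}=g_jg_i^{-1}$ is determined without ambiguity, and any other valid transport differs from it only by an element of the stabilizer $g_iHg_i^{-1}$ and thus carries $\lat_i$ to $\lat_j$ just the same. Stating the lemma this way reconciles the explicit formula $g_{ij}=g_jg_i^{-1}$ with the claimed uniqueness while remaining faithful to the non-freeness of the action.
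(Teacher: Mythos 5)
Your existence argument is essentially the paper's own, line for line: exhibit the candidate $g_{ij}=g_jg_i^{-1}$, push it through $\tau$ using equivariance to get $\tau(g_{ij}\cdot\lat_i)=g_{ij}\tau(\lat_i)=g_jH=\tau(\lat_j)$, and cancel $\tau$ because it is an identification (the paper invokes that $\tau$ is a diffeomorphism where you invoke injectivity; same use). Incidentally, the paper's appendix proof opens with a typo, writing $g_{ij}=g_ig_j^{-1}$ before immediately using $g_jH=g_{ij}g_iH$, which forces $g_{ij}=g_jg_i^{-1}$ as in the lemma statement and in your writeup.

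Where you genuinely diverge is on uniqueness, and there your treatment is more careful than the paper's. The paper disposes of ``$\exists!$'' in one sentence: from $\lat_j=g_{ij}\lat_i$ and \emph{transitivity} of the $G$-action it concludes $g_{ij}$ is unique. That inference is a non sequitur: transitivity delivers \emph{existence} of a transporter, while literal uniqueness would require the action to be free (simply transitive), and the $\textsf{SO}(n)$-action on $\mathbf{S}^{n-1}$ is not free for $n\geq 3$ --- the stabilizer of $\lat_i$ is the conjugate $g_iHg_i^{-1}\cong\textsf{SO}(n-1)$, exactly as you observe. Your resolution --- the full set of transporters is the coset $g_jHg_i^{-1}=g_{ij}\left(g_iHg_i^{-1}\right)$, so uniqueness holds only modulo the stabilizer, and the specific element $g_jg_i^{-1}$ is pinned down once a section of $G/H\to G$ (i.e., the coset representatives emitted by $\tau$) is fixed --- is the correct precise reading of the lemma, and it is the reading the method actually relies on downstream, since $\mathcal{G}(i,j)$ is a fixed lookup rather than an arbitrary transporter. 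In short: your existence half matches the paper's proof; on uniqueness you identified and repaired a genuine gap in the paper's own argument rather than reproducing it.
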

Thanks to Fact~\ref{fact_eqv} and Lemma~\ref{lemma_eqv}, simply identifying a suitable $\tau$ will provide us the necessary equivariance property. To do so, next, we parameterize $\tau$ by a neural network and describe a loss function to learn such a $\tau$ and $\left(\mathfrak{E}, \mathfrak{D}\right)$. 

\subsubsection{Learning a $G$-equivariant $\tau$ with DNNs }

Now that we established the key components:  \begin{inparaenum}[\bfseries (a)]
\item an  autoencoder $\left(\mathfrak{E}, \mathfrak{D}\right)$ to map from $\mathcal{X}$ to the latent space $\mathbf{S}^{n-1}$ \item a mapping $\tau: \mathbf{S}^{n-1} \rightarrow \textsf{SO}(n)$ which is $G = \textsf{SO}(n)$-equivariant,  \end{inparaenum} see Figure \ref{fig:tau_algo}, we discuss how to learn such a $\left(\mathfrak{E}, \mathfrak{D}\right)$ and a $G$-equivariant $\tau$. 

Let $X_i, X_j\in \mathcal{X}$ be two images with the corresponding covariates $i, j\in C$ with $i\neq j$. Let $\lat_i = \mathfrak{E}\left(X_i\right), \lat_j = \mathfrak{E}\left(X_j\right)$. Using Lemma \ref{lemma_eqv}, we can see that a $g_{ij}\in G$ to move from $\lat_i$ to $\lat_j$ does exist and is unique. Now, to learn a $\tau$ that satisfies the equivariance property (Fact~\ref{fact_eqv}), we will need $\tau$ to satisfy two conditions, $\tau(g_{ij} \cdot \lat_i) = g_{ij} \tau(\lat_i)$  and $\tau(g_{ji} \cdot \lat_j) = g_{ji} \tau(\lat_j)$ $\forall g \in G$. The two conditions are captured in the following loss function,
\begin{align}
	\lat_i &= \mathfrak{E}(X_i) \quad \lat_j = \mathfrak{E}(X_j)\\
	\label{loss_equv}
	L_{\text{stage} 1} = \hspace{-0.2in} \mathlarger\sum_{\substack{\left\{\left(X_i, i\right), \left( X_j, j\right)\right\} \\ \subset \mathcal{X}\times C} } \hspace{-0.24in} &\left.
	\begin{array}{ll}
		&\|\mathcal{G}(i, j)\cdot \tau\left(\lat_i\right) - \tau\left(\lat_j\right)\|^2 \quad +\\
		&\|\mathcal{G}^{-1}(i, j)\cdot \tau\left(\lat_j\right) - \tau\left(\lat_i\right)\|^2 
	\end{array}
	\right.
\end{align}
Here, $\mathcal{G}:C\times C\rightarrow G$ will be a table lookup given by $(i, j)\mapsto g_{ij}$ is the function that takes two values for the covariate $c$, say, $i, j$ corresponding to $X_i, X_j \in \mathcal{X}$ and 
simply returns the group element (rotation) $g_{ij}$ needed to move from $\mathfrak{E}(X_i)$ to $\mathfrak{E}(X_j)$. {\bf Choice of  $\mathcal{G}$:} In general, learning $\mathcal{G}$ is difficult since $C$ may not be continuous. In this work, we fix $\mathcal{G}$ and learn $\tau$ by minimizing \eqref{loss_equv}. We will simplify the choice of $\mathcal{G}$ as follows: assuming that $C$ is a numerical/ordinal random variable, we define $\mathcal{G}$ by $(i, j)\mapsto \textsf{expm}((i-j)\mathbf{1}_m)$. Here $m = {n \choose 2}$ is the dimension of $G$ and $\textsf{expm}$ is the matrix exponential, i.e., $\textsf{expm}: \mathfrak{so}(n) \rightarrow \textsf{SO}(n)$, where $\mathfrak{so}(n)$ is the Lie algebra \cite{hall2018theory} of $\textsf{SO}(n)$. Since $\mathfrak{so}(n)$ is a vector space, hence $(i-j)\mathbf{1}_m \in \mathfrak{so}(n)$. To reduce the runtime of $\textsf{expm}$, we replace $\textsf{expm}$ by a Cayley map \cite{selig2007cayley,mehta2019scaling} defined by: $\mathfrak{so}(n)\ni A \mapsto (I-A)(I+A)^{-1} \in \textsf{SO}(n)$. Here we used \textsf{expm} for parameterization (other choices also suitable). 

 Finally, we learn the encoder-decoder $(\mathfrak{E},\mathfrak{D})$ by using a reconstruction loss constraint with $L_{\text{stage} 1}$  in \eqref{loss_equv}. This 
 can also be thought of as a combined loss for this stage as $L_{\text{stage} 1} + \sum_i \| X_i - \mathfrak{D}(\mathfrak{E}(X_i))\|^2$ where the second term is the reconstruction loss.  The loss balances two terms and requires a scaling factor (see appendix \S~\ref{sec:app_scale}).  A flowchart of all steps in this stage can be seen in Fig~\ref{fig:tau_algo}. 
%

\begin{algorithm}[t]
	\caption{Learning representations that are \textit{Equivariant} to \textit{Covariates} and \textit{Invariant} to \textit{Site}}
	\label{alg:stage1_stage2}
	\begin{algorithmic}[0]
		\STATE \tikzmk{A}\textbf{Input:} Training Sets from multiple sites  $(X,Y)_{\text{site1}} $, $(X,Y)_{\text{site2}} $. Nuisance covariates $C$.\\
		\STATE \tikzmk{B}\boxit{yellow}\tikzmk{A}\textbf{Stage one: Equivariance to Covariates}
		\STATE $1:$ Parameterize Encoder-Decoder pairs $(\mathfrak{E},\mathfrak{D})$ and $\tau$ mapping with neural networks \\
		\STATE $2:$ Optimize over $(\mathfrak{E},\mathfrak{D})$ and $\tau$ to minimize,   \\
		\STATE \qquad \qquad $L_{\text{stage} 1} + \sum_i \| X_i - \mathfrak{D}(\mathfrak{E}(X_i))\|^2$
		\STATE {\em Output:} First latent space mapping $\mathfrak{E}$ and a supporting mapping function $\tau$. Here, $\tau$ is $G$-equivariant to the covariates $C$ (see Lemma~\eqref{lemma_eqv} and \eqref{loss_equv}). \\
		\STATE \tikzmk{B}\boxit{blue}\tikzmk{A}\textbf{Stage two: Invariance to Site}
		\STATE $1:$ Parameterize encoder $b$, predictor $h$ and decoder $\Psi$ with neural networks \\
		\STATE $2:$ Preserve equivariance from stage one with an equivariant mapping $\Phi$, (see Lemma~\eqref{arb_equiv}) \\
		\STATE $3:$ Optimize  $\Phi, b, h$ and $\Psi$ to minimize  $L_{\text{stage} 2} + \mathcal{MMD}$ \\
		\STATE {\em Output:} Second latent space mapping $\Phi$. Here, $\Phi$ is equivariant to the covariates and invariant to site.\\
		\tikzmk{B}\boxit{pink}
	\end{algorithmic}
\end{algorithm}

\begin{figure*}[!t]
	\begin{subfigure}[b]{0.4\linewidth}
		\includegraphics[width=\linewidth]{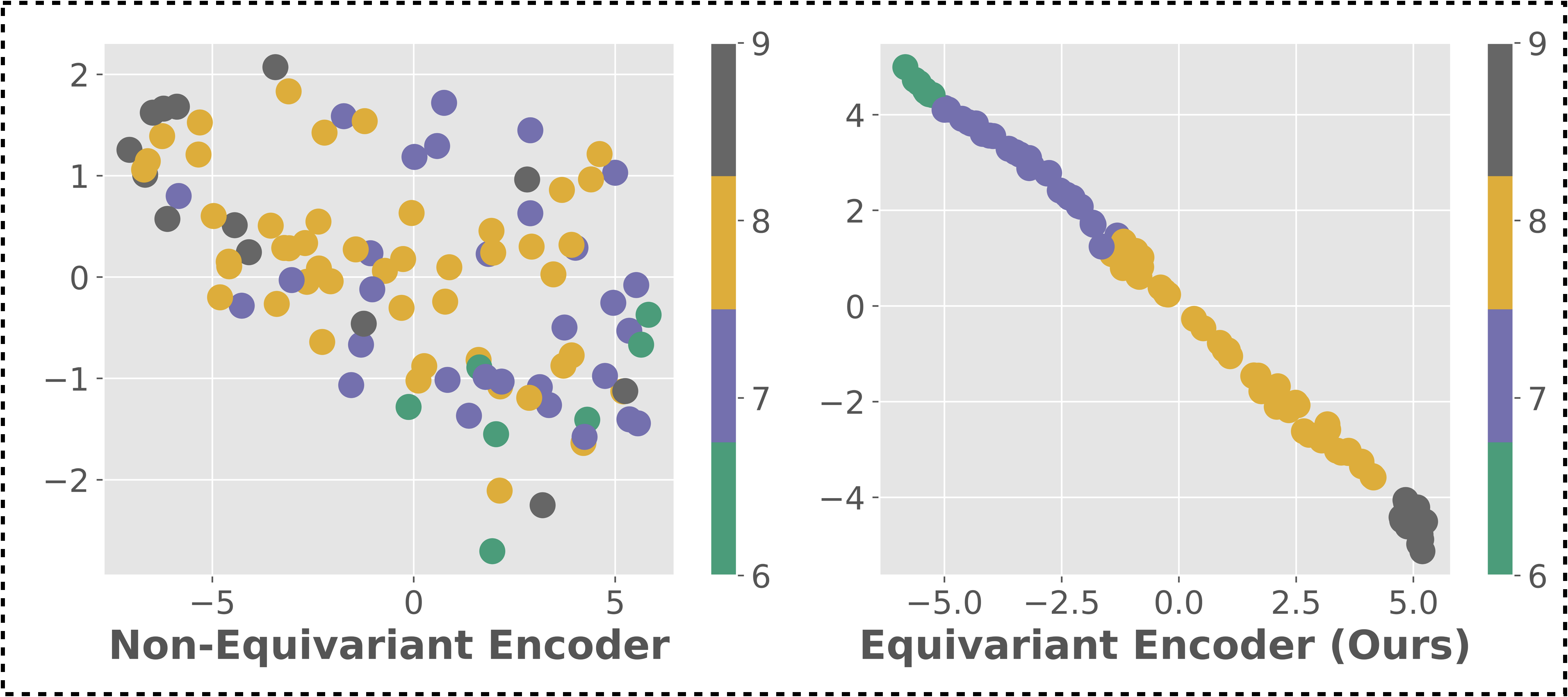}
		\caption{ADNI Dataset}
	\end{subfigure} \hspace{1cm}
	\begin{subfigure}[b]{0.4\linewidth}
		\includegraphics[width=\linewidth]{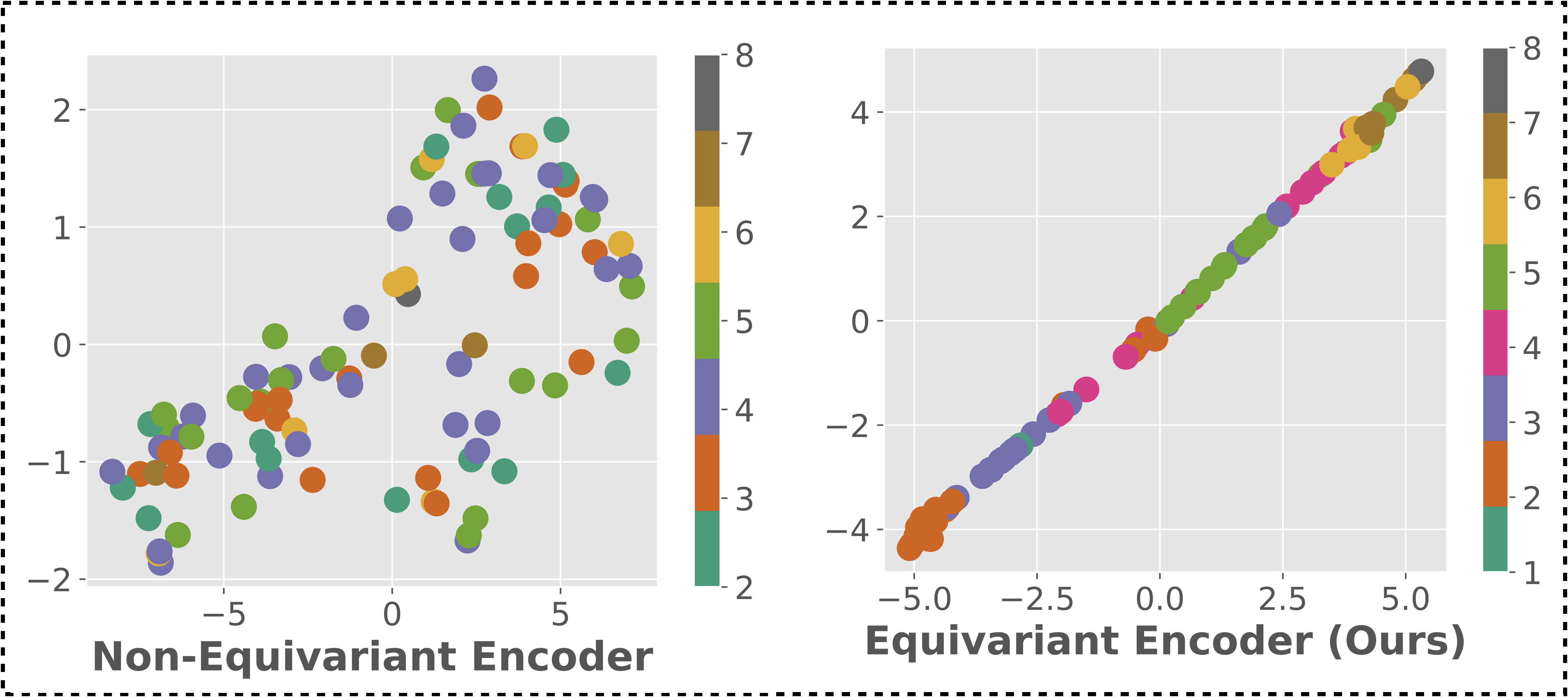}
		\caption{Adult Dataset}
	\end{subfigure}%
	\vspace{-0.1in}	
	\centering
	\caption{\label{fig:tsne_comparison} \footnotesize \textbf{ t-SNE plots of latent representations $\tau(\lat)$.} For ADNI \textbf{(left)} and Adult \textbf{(right)}, an equivariant encoder ensures that the latent features are evenly distributed and bear a monotonic trend with respect to the changes in the age covariate value. The non-equivariant space is generated from the Na\"ive pooling baseline. Each color denotes a discretized \textbf{age} group. Age was discretized only for the figure to highlight the density of samples in each age group.  }
	\vspace{-0.2in}
\end{figure*}

\subsection{Stage two: Invariance to Site}
\label{sec:stage_two}
Having constructed a latent space $\mathcal{L}$ that is equivariant to changes in the covariates $C$, we must now handle the site attribute,  
i.e., invariance with respect to site. 
Here, it will be convenient to project $\mathcal{L}$ onto a space that simultaneously preserves the equivariant structure from  $\mathcal{L}$ \textit{and} offers the flexibility to enforce site-invariance. The following lemma, inspired from the functional representations of probabilistic symmetries (\S$4.2$ of \cite{bloem2020probabilistic}), provides us 
strategies to achieve this goal. Here, consider $\Phi:\mathcal{L}\rightarrow \mathcal{Z}$ to be the projection.
	
\begin{lemma}
	\vspace{-0.1in}
\label{arb_equiv}
For a $\tau: \mathcal{L}\rightarrow G/H$ as defined above, and for any arbitrary mapping $b:\mathcal{L}\rightarrow \mathcal{Z}$, the function $\Phi: \mathcal{L}\rightarrow \mathcal{Z}$ defined by
\begin{align}
\label{eq_arb}
\Phi(\lat) = \tau(\lat)\cdot b\left(\tau(\lat)^{-1}\cdot \lat\right) 
\end{align}
is $G$-equivariant, i.e., $\Phi(g\cdot \lat) = g\Phi(\lat)$.
\end{lemma}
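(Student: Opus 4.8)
The plan is a direct substitution argument that exploits only two ingredients: the equivariance of $\tau$ established in Fact~\ref{fact_eqv}, namely $\tau(g\cdot\lat) = g\,\tau(\lat)$, and the fact that ``$\cdot$'' is a genuine (left) group action, so that $g^{-1}\cdot(g\cdot\lat) = \lat$ and $g\cdot(g'\cdot z) = (gg')\cdot z$ for all $g,g'\in G$. Throughout I would treat $\tau(\lat)$ as an honest element of $G$ (the rotation matrix produced by the identification, equivalently the $\tau$ network), since Lemma~\ref{lemma_eqv} already supplies a well-defined group element $g_{ij}$ rather than merely an abstract coset. This is the one point I would flag as needing care, and I return to it at the end.

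First I would expand $\Phi$ at the transformed point $g\cdot\lat$ using the defining formula \eqref{eq_arb}:
\[
\Phi(g\cdot\lat) = \tau(g\cdot\lat)\cdot b\Big(\tau(g\cdot\lat)^{-1}\cdot(g\cdot\lat)\Big).
\]
Next I would substitute $\tau(g\cdot\lat)=g\,\tau(\lat)$ into both occurrences. For the outer factor this simply replaces $\tau(g\cdot\lat)$ by $g\,\tau(\lat)$. For the argument of $b$, I take inverses, $\tau(g\cdot\lat)^{-1} = \tau(\lat)^{-1}g^{-1}$, and then use the action axioms to collapse $g^{-1}\cdot(g\cdot\lat)=\lat$:
\[
\tau(g\cdot\lat)^{-1}\cdot(g\cdot\lat) = \tau(\lat)^{-1}\cdot\big(g^{-1}\cdot(g\cdot\lat)\big) = \tau(\lat)^{-1}\cdot\lat.
\]
Hence the argument of $b$ is left unchanged by $g$, so that $b\big(\tau(g\cdot\lat)^{-1}\cdot(g\cdot\lat)\big) = b\big(\tau(\lat)^{-1}\cdot\lat\big)$.

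Combining the two computations and invoking associativity of the action one final time yields
\[
\Phi(g\cdot\lat) = \big(g\,\tau(\lat)\big)\cdot b\big(\tau(\lat)^{-1}\cdot\lat\big) = g\cdot\Big(\tau(\lat)\cdot b\big(\tau(\lat)^{-1}\cdot\lat\big)\Big) = g\cdot\Phi(\lat),
\]
which is exactly the claimed $G$-equivariance. The algebra here is short; the work is almost entirely in recognizing that the inner argument of $b$ is an action-invariant quantity.

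The only genuinely delicate step is the well-definedness I flagged above: $\tau(\lat)$ nominally lives in the quotient $G/H$, so both $\tau(\lat)^{-1}\cdot\lat$ and the left factor $\tau(\lat)\cdot(\cdots)$ should not depend on the choice of coset representative. I would resolve this by noting that $H=\textsf{SO}(n-1)$ is the stabilizer of the base point used to identify $\mathbf{S}^{n-1}\cong G/H$: for any representative $g$ of $\tau(\lat)$ the element $g^{-1}\cdot\lat$ equals that fixed base point, and replacing $g$ by $gh$ with $h\in H$ leaves $g^{-1}\cdot\lat$ invariant precisely because $H$ fixes it. Thus the inner argument of $b$ is representative-independent, while the outer multiplication is handled consistently by committing to the single representative that $\tau$ (equivalently $\mathcal{G}$ from Lemma~\ref{lemma_eqv}) already returns. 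I expect this bookkeeping, rather than the substitution itself, to be the main obstacle to a fully rigorous write-up.
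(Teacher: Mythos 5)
Your proof is correct and follows essentially the same route as the paper's: expand $\Phi(g\cdot\lat)$, substitute $\tau(g\cdot\lat)=g\,\tau(\lat)$ from Fact~\ref{fact_eqv}, and observe that the argument of $b$ collapses to the $g$-independent quantity $\tau(\lat)^{-1}\cdot\lat$. Your closing remark on coset-representative well-definedness addresses a point the paper's proof silently assumes (it treats $\tau(\lat)$ as an element of $\textsf{SO}(n)$, as in its learning formulation), and your resolution via the stabilizer $H$ is the right one.
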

Proof is available in the appendix \S~\ref{sec:app_proofs}. Note that $\Phi$ remains equivariant for {\bf any mapping} $b$. This provides us the option to parameterize $b$ as a neural network and train the entirety of $\Phi$ for the desired site invariance where equivariance will be preserved due to \eqref{eq_arb}.  In this work, we learn such a $\Phi: \mathcal{L}\rightarrow \mathcal{Z}$ with the help of a decoder $\Psi: \mathcal{Z}\rightarrow \mathcal{L}$ by minimizing the following loss,   
\begin{align}
    \label{loss_arb_equv}
    \hspace{-0.1in} L_{\text{stage} 2} = \hspace{-0.1in} \sum_{\substack{\lat = \mathfrak{E}(X)\in \mathcal{L} \\ X\in \mathcal{X}, Y \in \mathcal{Y}}}  \overbrace{\|\lat - \Psi(\Phi(\lat))\|^2}^{\text{Reconstruction loss}}  +  \overbrace{\|Y-h(\Phi(\lat))\|^2}^{\text{Prediction loss}} \\ 
    \label{loss_arb_equv_constraint}
    \text{subject to} \quad \underbrace{\Phi(\lat) = \tau(\lat)\cdot b\left(\tau(\lat)^{-1}\cdot \lat\right)}_{\text{$G$-equivariant map}}
\end{align}
Minimizing the loss (\ref{loss_arb_equv}) with the constraint (\ref{loss_arb_equv_constraint}) allows learning the network $b: \mathcal{L} \rightarrow \mathcal{Z}$ and the decoder $\Psi:\mathcal{Z}\rightarrow \mathcal{L}$. 
%
We are now left with asking that $Z\in \mathcal{Z}$ be such that the representations are invariant across the sites. We simply use the following MMD criterion although other statistical distance measures can also be utilized. 
\begin{align}
\label{mmd_eq}
    \hspace{-0.1in} \mathcal{MMD} = \|\underset{\substack{Z_1\sim \\ P\left(\Phi(\lat)\right)_{\text{site1}}}}{\E} \mathcal{K}(Z_1, \cdot) - \underset{\substack{Z_2\sim \\P\left(\Phi(\lat)\right)_{\text{site2}}}}{\E} \mathcal{K}(Z_2, \cdot)\|_{\mathcal{H}}
\end{align}
The criterion is defined using a Reproducing Kernel Hilbert Space with norm $\|\cdot\|_{\mathcal{H}}$ and kernel $\mathcal{K}$. We combine (\ref{loss_arb_equv}), (\ref{loss_arb_equv_constraint}) and (\ref{mmd_eq}) as the objective function to ensure site invariance. Thus, the combined loss function $L_{\text{stage} 2}+ \mathcal{MMD}$ is minimized to learn $(\Phi, \Psi)$. Scaling factor details are available in the appendix \S~\ref{sec:app_scale}.

{
{\bf Summary of the two stages.}
 Our overall method comprises of two stages. The first stage, Section~\ref{sec:stage_one},  involves learning the  $\tau$ function. The  function learned in this stage is  $G$-equivariant by the choice of the loss $L_{\text{stage} 1}$, see \eqref{loss_equv}. Our next stage, Section~\ref{sec:stage_two}, employs the learned $\tau$ function and a trainable mapping $b$ to generate invariant representations. This stage preserves $G$-equivariance due to the $\Phi$ mapping in \eqref{eq_arb}. The loss for the second step is $L_{\text{stage} 2} + \mathcal{MMD}$ , see \eqref{loss_arb_equv}. Our method is summarized in Algorithm~\ref{alg:stage1_stage2}. Convergence behavior of the proposed optimization (of $\tau,\Phi$) still seems challenging to characterize exactly, but recent papers provide some hope, and opportunities. For example, if the networks are linear, then results from \cite{ghosh2020alternating} maybe applicable which explain the our superior empirical performance. 
}

\section{Experiments}
We evaluate our proposed encoder for site-invariance and robustness to changes in the covariate values $C$. Evaluations are performed on two multi-site neuroimaging datasets, where algorithmic developments are likely to be most impactful. Prior to neuroimaging datasets, we also conduct experiments on two standard fairness datasets, German and Adult. The inclusion of fairness datasets in our analysis, provides us a means for sanity tests and optimization feasibility on an established problem. Here, the goal of achieving fair representations is treated as pooling multiple subsets of data indexed by separate sensitive attributes. We begin our analysis by first describing our measures of evaluation and then reporting baselines for comparisons. 


{\bf Measures of Evaluation.}  Recall that our method involves learning $\tau$ as in \eqref{loss_equv} to satisfy the equivariance property. Moreover, we need to learn $\Phi$ as in \eqref{eq_arb}--\eqref{loss_arb_equv} to achieve site invariance. Our measures assess the structure of the latent space $\tau(\lat)$ and $\Phi(\lat)$. The measures are: \begin{inparaenum}[\bfseries (a)]\item $\boldsymbol{\Delta_{Eq}}:$ This metric evaluates the $\ell_2$ distance between $\tau(\lat_i)$ and  $\tau(\lat_j)$ for all pairs $i, j$. Formally, it is computed as 
\begin{align}
	\label{loss_equv1}
	\boldsymbol{\Delta_{Eq}} &= \hspace{-0.1in}\sum_{\substack{\left\{\left(X_i, i\right), \left( X_j, j\right)\right\}\subset \mathcal{X}\times C\\ \lat_i = \mathfrak{E}_e(X_i), \lat_j = \mathfrak{E}_e(X_j)} } |i-j| \| \tau\left(\lat_i\right) - \tau\left(\lat_j\right)\|^2 
\end{align}
 A higher value of this metric indicates that  $\tau(\lat_i)$ and  $\tau(\lat_j)$ are related by the group action $g_{ij}$. Additionally, we use t-SNE \cite{van2008visualizing} to qualitatively visualize the effect of $\tau$.
 \item $\boldsymbol{Adv}:$ This metric quantifies the site-invariance achieved by the encoder $\Phi$. We evaluate if $\Phi(\lat)$ for a learned $\lat\in \mathcal{L}$ has any information about the site. A three layered fully network (see appendix \S~\ref{sec:app_nn}) is trained as an adversary to predict site from $\Phi(\lat)$, similar to \cite{NIPS2017_8cb22bdd}. A lower value of $Adv$, that is close to random chance, is desirable. \item $\boldsymbol{\mathcal{M}}:$ Here, we compute the $\mathcal{MMD}$ measure, as in (\ref{mmd_eq}), on the test set. A smaller value of $\mathcal{M}$ indicates better invariance to site. Lastly, \item $\boldsymbol{\mathcal{ACC}}:$ This metric notes the test set accuracy in predicting the target variable $Y$.  \end{inparaenum} 
 
 {\bf Baselines for Comparison.} We contrast our method's performance with respect to a few well-known baselines. \begin{inparaenum}[(i)] \item  \textbf{Na\"ive:} This method indicates a na\"ive approach of pooling data from multiple sites without any scheme to handle nuisance variables. \item   \textbf{MMD \cite{li2014learning}:} This method minimizes the distribution differences across the sites without any requirements for equivariance to the covariates. The latent representations being devoid of the equivariance property result in lower accuracy values as we will see shortly. \item \textbf{CAI \cite{NIPS2017_8cb22bdd}:}  This method introduces a discriminator to train the encoder in a minimax adversarial fashion. The training routine directly optimizes the $Adv$ measure above. While being a powerful implicit data model, adversarial methods are known to have unstable training and lack convergence guarantees \cite{NEURIPS2019_56c51a39}.  \item \textbf{SS \cite{zhou2018statistical}:} This method adopts a Sub-sampling (SS) framework to divide the images across the sites by the covariate values $C$. An MMD criterion is minimized individually for each of the sub-sampled groups and an average estimate is computed. Lastly, \item \textbf{RM \cite{motiian2017unified}:} Also used in \cite{mahajan2020domain}, RandMatch (RM) learns invariant representations on samples across sites that "match" in terms of the class label (we match based on both $Y$ and $C$ values)  \end{inparaenum}. Below, we summarize each method and nuisance attribute correction adopted by them.
 
 \begin{table}[!h]
 	\vspace{-0.1in}
 	\centering
 	\resizebox{\columnwidth}{!}{%
 		\begin{tabular}{c c c c c c c} \hline \hline
 			\multirow{1}{*}{Correction}   & Na\"ive             & MMD \cite{li2014learning}  &  CAI \cite{NIPS2017_8cb22bdd} & SS \cite{zhou2018statistical}  &  RM \cite{motiian2017unified} & Ours \\ \hline\hline \\
			\multirow{1}{*}{Site } & \multirow{1}{*}{\xmark} & \multirow{1}{*}{\cmark} & \multirow{1}{*}{\cmark}      & \multirow{1}{*}{\cmark}  & \multirow{1}{*}{\cmark} & \multirow{1}{*}{\cmark}\\
 			\multirow{1}{*}{Covariates}     & \multirow{1}{*}{\xmark} & \multirow{1}{*}{\xmark} & \multirow{1}{*}{\xmark}  & \multirow{1}{*}{\cmark} & \multirow{1}{*}{\cmark} & \multirow{1}{*}{\cmark}  \\
 			\hline\hline
 		\end{tabular}%
 	}
 	\caption{\label{saas_hp_table} \footnotesize Baselines in the paper and their nuisance attribute correction. }
 	\vspace{-0.1in}
 \end{table}

 \begin{figure}[!t]
	\includegraphics[width=\columnwidth]{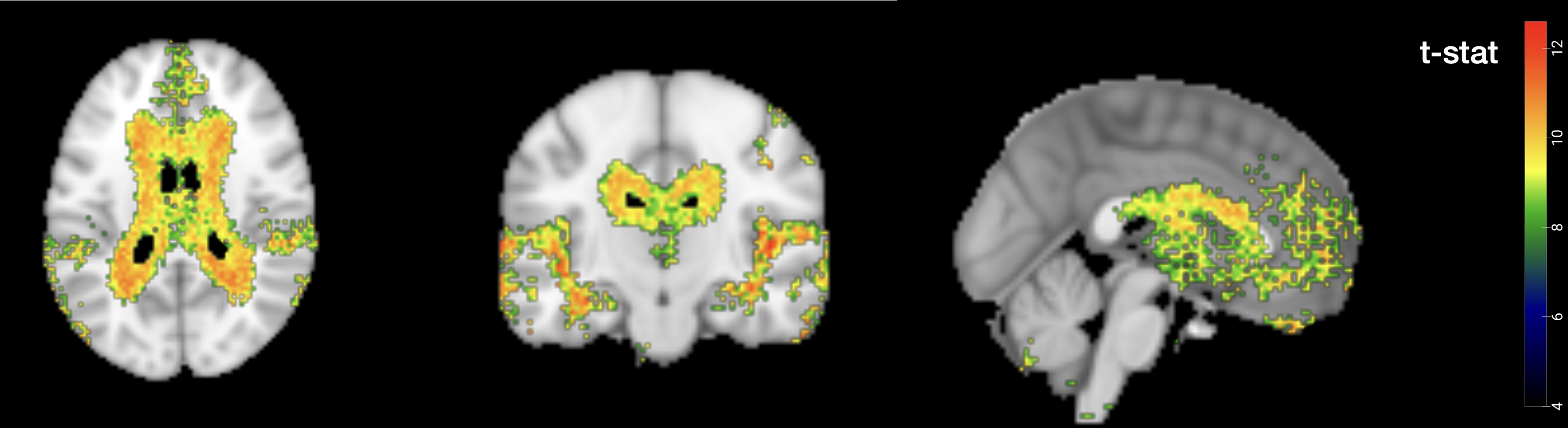} \quad
	\centering
	\vspace{-0.2in}
	\caption{\footnotesize \textbf{Statistical Analysis on the reconstructed outputs.} The voxels that are significantly associated with Alzheimer's disease  (p $< 0.001$) are shown. Adjustments for multiple comparisons were made using Bonferroni correction.  A high density of significant voxels indicates that our method preserves disease related signal after pooling across scanners.}
	\label{fig:tstatmap} 
	\vspace{-0.2in}
\end{figure}
 
 We evaluate methods on the test partition provided with the  datasets. The mean of the metrics over three random seeds is reported. The hyper-parameter selection is done on a validation split from the training set, such that the prediction accuracy falls within $5\%$ window relative to the best performing model \cite{NEURIPS2018_donini} (more details in appendix \S~\ref{sec:app_evaluation}). 
\subsection{Obtaining Fair Representations}
We approach the problem of learning fair representations through our multi-site pooling formulation. Specifically, we consider each sensitive attribute value as a separate site. Results on two benchmark datasets, German and Adult \cite{dua2017uci}, are described below.

{\bf German Dataset.}
This dataset is a classification problem used to predict defaults on the consumer loans in the German market.  Among the several features in the dataset, the attribute \textit{foreigner} is chosen as a sensitive attribute. We train our encoder while maintaining equivariance with respect to the continuous valued \textit{age} feature. Table~\ref{tab:results} provides a summary of the results in comparison to the baselines. Our equivariant encoder maximizes the $\Delta_{Eq}$ metric indicating the the latent space $\tau(\lat)$ is well separated for different values of \textit{age}. Further, the invariance constraint improves the $Adv$ metric signifying a better elimination of sensitive attribute information from the representations. The $\mathcal{M}$ metric is higher relative to the other baselines. The $\mathcal{ACC}$ for all the methods are within a $2\%$ range.

{\bf Adult Dataset.}
In the Adult dataset, the task is to predict if a person has an income higher (or lower) than $\$50K$ per year. The dataset is biased with respect to gender, roughly, $1$-in-$5$ women (in contrast to $1$-in-$3$ men) are reported to make over $\$50K$. Thus, the female/male genders are considered as two separate sites with \textit{age} as a nuisance covariate feature. As shown in Table~\ref{tab:results}, our equivariant encoder improves on metrics $\Delta_{Eq}$ and $Adv$ relative to all the baselines similar to the German dataset. In addition to the quantitative metrics, we visualize the t-SNE plots of the representations $\tau(\lat)$ in Fig.~\ref{fig:tsne_comparison} (right). It is clear from the figure that an equivariant encoder imposes a certain monotonic trend as the \textit{Age} values as varied. 

\begin{figure}[!b]
	\vspace{-0.1in}
	\includegraphics[width=\columnwidth]{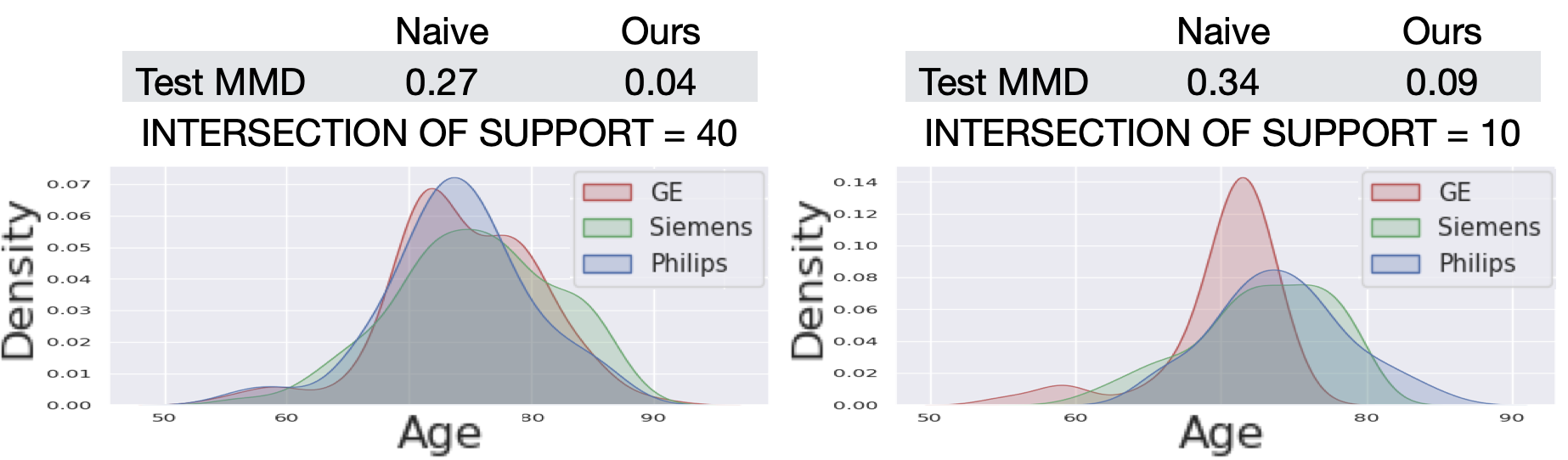} \quad
	\centering
	\vspace{-0.2in}
	\caption{\footnotesize \textbf{Distribution of age covariate in the ADNI dataset.} Two settings are considered -- \textbf{(left)} the intersection of the support is large, and \textbf{(right)} with a smaller common support. Despite the mismatch of support across scanner attributes, our approach minimizes the MMD measure (desirable) on the test set relative to the na\"ive pooling method.}
	\label{fig:density_comparison} 
	\vspace{-0.2in}
\end{figure}

\begin{table*}[!t] 
	\footnotesize
	\captionsetup{justification=centering} 
	\setlength\tabcolsep{0pt} 
	\caption*{$\boldsymbol{{\Delta}_{Eq}}:$  Equivariance Gap, $\boldsymbol{Adv}:$ Adversarial Test Accuracy, $\boldsymbol{\mathcal{M}}:$ Test  $\mathcal{MMD}$ measure, $\boldsymbol{\mathcal{ACC}}:$ Test prediction accuracy\\
		$\uparrow$: Higher Value is preferred, $\downarrow$: Lower Value is preferred}
	\vspace{-4mm}
	\begin{tabular*}{\linewidth}{l @{\extracolsep{\fill}}
			*{24}{S[table-format=3.0]}}
		\midrule\midrule
		& \multicolumn{4}{c}{German} & \multicolumn{4}{c}{Adult} & \multicolumn{4}{c}{ADNI}& \multicolumn{4}{c}{ADCP} \\
		\cmidrule{2-5} \cmidrule{6-9} \cmidrule{10-13} \cmidrule{14-17}
		& {${\Delta}_{Eq}$ $\uparrow$} & {$Adv$ $\downarrow$} & {$\newsym{M}$ $\downarrow$} & \cellcolor{gray!25}{$\mathcal{ACC}$ $\uparrow$} & {${\Delta}_{Eq}$ $\uparrow$} & {$Adv$ $\downarrow$} & {$\newsym{M}$ $\downarrow$} & \cellcolor{gray!25}{$\mathcal{ACC}$ $\uparrow$} & {${\Delta}_{Eq}$ $\uparrow$} & {$Adv$ $\downarrow$} & {$\newsym{M}$ $\downarrow$} & \cellcolor{gray!25}{$\mathcal{ACC}$ $\uparrow$} & {${\Delta}_{Eq}$ $\uparrow$} & {$Adv$ $\downarrow$} & {$\newsym{M}$ $\downarrow$} & \cellcolor{gray!25}{$\mathcal{ACC}$ $\uparrow$}  \\ 
		\midrule\midrule
		Na\"ive & {$4.6{\scriptscriptstyle(0.7)}$ } &  {$0.62{\scriptscriptstyle(0.03)}$ }  &  {$7.7{\scriptscriptstyle(0.8)}$}  &  \cellcolor{gray!25}{$74{\scriptscriptstyle(0.9)}$}  &  {$3.4{\scriptscriptstyle(0.7)}$}  &  {$83{\scriptscriptstyle(0.1)}$}  &  {$9.8{\scriptscriptstyle(0.3)}$}  &  \cellcolor{gray!25}{$84{\scriptscriptstyle(0.1)}$}  &  {$3.1{\scriptscriptstyle(1.0)}$}  &  {$59{\scriptscriptstyle(2.9)}$}  &  {$27{\scriptscriptstyle(1.6)}$}  &  \cellcolor{gray!25}{$80{\scriptscriptstyle(2.6)}$} &  {$4.1{\scriptscriptstyle(0.9)}$} &  {$49{\scriptscriptstyle(8.4)}$} &  {$90{\scriptscriptstyle(8.7)}$} &  \cellcolor{gray!25}{$83{\scriptscriptstyle(4.4)}$}    \\
		MMD \cite{li2014learning}  & {$4.5{\scriptscriptstyle(1.0)}$ }&  {$0.66{\scriptscriptstyle(0.04)}$}  &  {$1.5{\scriptscriptstyle(0.3)}$}  &  \cellcolor{gray!25}{$73{\scriptscriptstyle(1.5)}$}  &  {$3.4{\scriptscriptstyle(0.9)}$}  &  {$83{\scriptscriptstyle(0.1)}$}  &  {$3.1{\scriptscriptstyle(0.3)}$}  &  \cellcolor{gray!25}{$84{\scriptscriptstyle(0.1)}$} &  {$3.1{\scriptscriptstyle(1.0)}$}  &  {$59{\scriptscriptstyle(3.3)}$}  &  {$27{\scriptscriptstyle(1.7)}$}  &  \cellcolor{gray!25}{$80{\scriptscriptstyle(2.6)}$}   &  {$3.6{\scriptscriptstyle(1.0)}$} &  {$49{\scriptscriptstyle(11.9)}$} &  {$86{\scriptscriptstyle(11.0)}$} &  \cellcolor{gray!25}{$84{\scriptscriptstyle(6.5)}$}  \\ 
		CAI \cite{NIPS2017_8cb22bdd} & {$1.9{\scriptscriptstyle(0.6)}$ }&  {$0.65{\scriptscriptstyle(0.01)}$}  &  {$1.2{\scriptscriptstyle(0.2)}$}  &  \cellcolor{gray!25}{$76{\scriptscriptstyle(1.3)}$}  &  {$0.1{\scriptscriptstyle(0.0)}$}  &  {$81{\scriptscriptstyle(0.7)}$}  &  {$4.2{\scriptscriptstyle(2.4)}$}  &  \cellcolor{gray!25}{$84{\scriptscriptstyle(0.04)}$}  &  {$2.4{\scriptscriptstyle(0.7)}$}  &  {$61{\scriptscriptstyle(2.1)}$}  &  {$27{\scriptscriptstyle(1.5)}$}  &  \cellcolor{gray!25}{$74{\scriptscriptstyle(3.6)}$}   &  {$2.8{\scriptscriptstyle(1.6)}$} &  {$56{\scriptscriptstyle(6.9)}$} &  {$85{\scriptscriptstyle(12.3)}$} &  \cellcolor{gray!25}{$82{\scriptscriptstyle(5.1)}$} \\ 
		SS \cite{zhou2018statistical} & {$3.8{\scriptscriptstyle(0.5)}$ }&  {$0.70{\scriptscriptstyle(0.07)}$}  &  {$1.5{\scriptscriptstyle(0.6)}$}  &  \cellcolor{gray!25}{$76{\scriptscriptstyle(0.9)}$}  &  {$2.8{\scriptscriptstyle(0.5)}$}  &  {$83{\scriptscriptstyle(0.2)}$}  &  {$1.5{\scriptscriptstyle(0.2)}$}  &  \cellcolor{gray!25}{$84{\scriptscriptstyle(0.1)}$}  &  {$3.7{\scriptscriptstyle(0.5)}$}  &  {$57{\scriptscriptstyle(2.1)}$}  &  {$26{\scriptscriptstyle(1.6)}$}  &  \cellcolor{gray!25}{$81{\scriptscriptstyle(3.7)}$}   &  {$3.4{\scriptscriptstyle(1.3)}$} &  {$51{\scriptscriptstyle(6.7)}$} &  {$88{\scriptscriptstyle(14.6)}$} &  \cellcolor{gray!25}{$82{\scriptscriptstyle(3.5)}$} \\ 	
		RM \cite{motiian2017unified} & {$3.4{\scriptscriptstyle(0.4)}$ }&  {$0.66{\scriptscriptstyle(0.04)}$}  &  {$7.5{\scriptscriptstyle(0.9)}$}  &  \cellcolor{gray!25}{$74{\scriptscriptstyle(2.1)}$}  &  {$0.8{\scriptscriptstyle(0.1)}$}  &  {$82{\scriptscriptstyle(0.4)}$}  &  {$4.8{\scriptscriptstyle(0.7)}$}  &  \cellcolor{gray!25}{$84{\scriptscriptstyle(0.3)}$}  &  {$0.8{\scriptscriptstyle(0.9)}$}  &  {$52{\scriptscriptstyle(5.4)}$}  &  {$22{\scriptscriptstyle(0.6)}$}  &  \cellcolor{gray!25}{$78{\scriptscriptstyle(3.8)}$}   &  {$0.4{\scriptscriptstyle(0.5)}$} &  {$40{\scriptscriptstyle(4.7)}$} &  {$77{\scriptscriptstyle(13.8)}$} &  \cellcolor{gray!25}{$84{\scriptscriptstyle(5.3)}$} \\ 	
		Ours & {$\boldsymbol{6.4{\scriptscriptstyle(0.6)}}$ } &  {$\boldsymbol{0.54{\scriptscriptstyle(0.01)}}$}  &  {$2.7{\scriptscriptstyle(0.6)}$}  &  \cellcolor{gray!25}{$75{\scriptscriptstyle(3.3)}$}  &  {$\boldsymbol{5.3{\scriptscriptstyle(0.9)}}$}  &  {$\boldsymbol{75{\scriptscriptstyle(1.4)}}$}  &  {$7.1{\scriptscriptstyle(0.6)}$}  &  \cellcolor{gray!25}{$83{\scriptscriptstyle(0.1)}$} &  {$\boldsymbol{5.1{\scriptscriptstyle(1.2)}}$}  &  {$\boldsymbol{50{\scriptscriptstyle(4.2)}}$}  &  {$\boldsymbol{16{\scriptscriptstyle(7.2)}}$}  &  \cellcolor{gray!25}{$77{\scriptscriptstyle(4.8)}$} &  {$\boldsymbol{7.5{\scriptscriptstyle(1.2)}}$} &  {$49{\scriptscriptstyle(7.3)}$} &  {$\boldsymbol{70{\scriptscriptstyle(22.3)}}$} &  \cellcolor{gray!25}{$81{\scriptscriptstyle(1.8)}$}  \\ 
		\midrule\midrule
	\end{tabular*} 
	\captionsetup{justification=justified} 
	\caption{ \footnotesize  \textbf{Quantitative Results.} We show Mean(Std) results over multiple run. For our baselines, we consider a \textbf{Na\"ive} encoder-decoder model, learning  representations via minimizing the \textbf{MMD} criterion \cite{li2014learning} and Adversarial training \cite{NIPS2017_8cb22bdd}, termed as \textbf{CAI}. We also compare against Sub-sampling (\textbf{SS}) \cite{zhou2018statistical} that minimizes the MMD criterion separately for every age group, and  the RandMatch (\textbf{RM}) \cite{motiian2017unified} baseline that generates matching input pairs based on the Age and target label values. The SS and RM baselines discard subset of samples if a match across sites is not available. The measure $Adv$ represents the adversarial test accuracy except for the German dataset where ROC-AUC is used due to high degree of skew in the data.  } 
	\label{tab:results}
	\vspace{-0.2in}
\end{table*} 


\subsection{Pooling Brain Images across Scanners}
For our motivating application, we focus on pooling tasks for two different brain imaging datasets where the problem is to classify individuals diagnosed with Alzheimer’s disease (AD) and healthy control (CN). 

{\bf Setup. } Images are pre-processed by first normalizing and then skull-stripping using Freesurfer \cite{fischl2012freesurfer}. A linear (affine) registration is performed to register each image to MNI template space. Images are trained using 3D convolutions with ResNet \cite{he2016identity} backbone (details in the appendix \S~\ref{sec:app_nn}). Since the datasets are small, we report results over five random training-validation splits.

{\bf ADNI Dataset.} The data for this experiment has been downloaded from the Alzheimers Disease Neuroimaging Initiative (ADNI) database (adni.loni.usc.edu). We have three scanner types in the dataset, namely, GE, Siemens and Phillips. Similar to the fairness experiments, equivariance is sought relative to the covariate \textit{Age}. The values of \textit{Age} are in the range $50$-$95$ as indicated in density plot of Fig.~\ref{fig:density_comparison} (left). The \textit{Age} distribution is observed to vary across different scanners, albeit minimally, in the full dataset. In the t-SNE plot, Fig.~\ref{fig:tsne_comparison} (left), we see that the latent space has an equivariant structure. Closer inspection of the plot shows that the representations vary in the same order as that of \textit{Age}. Different colors indicate different \textbf{Age} sub-groups. Next, in Fig.~\ref{fig:tstatmap}, we present the t-statistics in the template space on the reconstructed images after pooling. Here, the t-statistics measure the association with AD/CN target labels. As seen in the figure, the voxels significantly associated with the Alzheimer's disease ($p<0.001$) are considerable in number. This result supports our goal to combine datasets to increase sample size and obtain a high power in statistical analysis. Next, in Fig.~\ref{fig:density_comparison} (right), we increase the difficulty of our problem by randomly sub-sampling for each scanner group such that the intersection of support is minimized. In such an extreme case, our method attains a better $\mathcal{M}$ metric relative to the Na\"ive method, thus justifying the applicability to situations where there is a mismatch of support across the sites. Lastly, we inspect the performance on the quantitative metrics on the entire dataset in Table~\ref{tab:results}. All metrics $\Delta_{Eq}$, $Adv$ and $\mathcal{M}$ improve relative to the baselines with a small drop in the $\mathcal{ACC}$. 

{\bf ADCP dataset.}
This experiment's data was collected as part of the NIH-sponsored Alzheimer's Disease Connectome Project (ADCP) \cite{adluru2020geodesic,hwang2018ic}. It is a two-center MRI, PET, and behavioral study of brain connectivity in AD. Study inclusion criteria for AD / MCI (Mild Cognitive Impairment) patients consisted of age $55-90$ years who retain decisional capacity at initial visit, and meet criteria for probable AD or MCI. MRI images were acquired at three sites. The three sites differ primarily in terms of the patient demographics. We inspect the quantitative results of this experiment in Tab.~\ref{tab:results} and place the qualitative results in the appendix \S~\ref{sec:app_adcp},\ref{sec:app_visualize}. The table reveals considerable improvements in all our metrics relative to the Na\"ive method. \\
{\bf Limitations.}
Currently, our 
formulation assumes that the to-be-pooled imaging datasets are roughly similar -- there 
is definitely a role for new developments 
in domain alignment to facilitate 
deployment in a broader range of applications. Secondly, larger latent space dimensions may cause compute overhead due to matrix exponential parameterization. Finally, algorithmic improvements can potentially simplify the overhead of the two-stage training. \\
\vspace{-0.2in}
\section{Conclusions}
Retrospective analysis of data pooled from previous / ongoing studies can have a sizable influence on identifying early disease processes, not otherwise possible to glean from 
analysis of small neuroimaging datasets. 
Our development based 
on recent results in equivariant representation learning 
offers a strategy to perform such analysis when covariates/nuisance attributes are not identically distributed across sites. 
Our current work is limited to a few such variables but suggests that this 
direction is promising and can 
potentially lead to more powerful 
algorithms. \\\\
{\bf Acknowledgments}
The authors are grateful to Vibhav Vineet (Microsoft Research) for discussions on the causal diagram used in the paper. Thanks to Amit Sharma (Microsoft Research) for the conversation on their MatchDG project. Special thanks to Veena Nair and Vivek Prabhakaran from UW Health for helping with the ADCP dataset. Research supported by NIH grants to UW CPCP (U54AI117924), RF1AG059312, Alzheimer's Disease Connectome Project (ADCP) U01 AG051216, and RF1AG059869, as well as NSF award CCF 1918211. Sathya Ravi was also supported by UIC-ICR start-up funds. 

{\small
\bibliographystyle{ieee_fullname}
\bibliography{egbib}
}

\appendix
\section{Appendix}

\subsection{Proofs of theoretical results }
\label{sec:app_proofs}
In this section, we will provide the proofs of Lemma~$4$ and Lemma~$5$ discussed in the main paper.
\begin{theorem*}
	\label{lemma_eqv}
	Given two latent space representations $\lat_i, \lat_j \in \mathbf{S}^{n-1}$, and the corresponding cosets $g_iH = \tau(\lat_i)$ and $g_jH = \tau(\lat_j)$, $\exists! g_{ij} = g_jg_i^{-1}\in G$ such that $\lat_j = g_{ij}\cdot \lat_i$.
\end{theorem*}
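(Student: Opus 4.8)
The plan is to split the claim into existence of the stated element and its uniqueness, and to lean entirely on the two structural facts already in hand: the equivariance of $\tau$ (Fact~\ref{fact_eqv}) and the fact that $\tau$ is an identification, hence a bijection. For existence I would avoid any matrix computation and argue purely with cosets. Take the candidate $g_{ij} = g_j g_i^{-1} \in G$. The coset identity $g_j H = (g_j g_i^{-1}) g_i H = g_{ij}\,(g_i H)$, together with $\tau(\lat_i) = g_i H$ and Fact~\ref{fact_eqv}, gives
\[
 \tau(\lat_j) \;=\; g_j H \;=\; g_{ij}\,\tau(\lat_i) \;=\; \tau\!\left(g_{ij}\cdot \lat_i\right).
\]
Injectivity of $\tau$ then forces $\lat_j = g_{ij}\cdot \lat_i$, which is exactly the existence statement.

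A second, perhaps more transparent, route to existence is through the orbit map that defines $\tau$: fixing a base point $\lat_0 \in \mathbf{S}^{n-1}$ with $\mathrm{Stab}(\lat_0) = H$, any representative of $\tau(\lat_i) = g_i H$ satisfies $g_i\cdot \lat_0 = \lat_i$ (elements of $H$ fix $\lat_0$), and likewise $g_j\cdot \lat_0 = \lat_j$. Then $g_{ij}\cdot \lat_i = g_j g_i^{-1} g_i \cdot \lat_0 = g_j \cdot \lat_0 = \lat_j$ directly, using nothing beyond the properties stated above. I would present whichever of the two versions reads cleanest alongside the surrounding text.

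The step I expect to be the real obstacle is uniqueness. The action of $G = \textsf{SO}(n)$ on $\mathbf{S}^{n-1}$ is transitive but \emph{not} free: the stabilizer of any point is a conjugate of $H = \textsf{SO}(n-1)$, of positive dimension $\binom{n-1}{2}$ for $n \ge 3$. Hence the set of group elements carrying $\lat_i$ to $\lat_j$ is the entire coset $g_{ij}\,\mathrm{Stab}(\lat_i)$, so no literal element of $G$ is unique, and transitivity by itself delivers only existence. I would therefore read ``$\exists!$'' as uniqueness of the distinguished representative: once $\tau$ pins down the cosets $\tau(\lat_i) = g_iH$ and $\tau(\lat_j) = g_jH$, the element $g_{ij} = g_j g_i^{-1}$ is the canonical choice, and any two admissible choices differ by an element of $\mathrm{Stab}(\lat_i)$, a conjugate of $H$. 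I would state the uniqueness claim explicitly modulo this stabilizer, and note that the caveat is harmless for the construction in \eqref{loss_equv}, which needs only a well-defined group element realizing $\lat_j = g_{ij}\cdot \lat_i$ rather than a genuinely unique one.
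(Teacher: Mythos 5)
Your existence argument is essentially identical to the paper's: the paper also passes from the coset identity $g_jH = g_{ij}g_iH$ through the equivariance of $\tau$ (Fact~\ref{fact_eqv}) to $\tau(\lat_j) = \tau(g_{ij}\cdot \lat_i)$, and then invokes the fact that $\tau$ is an identification (a diffeomorphism, hence injective) to conclude $\lat_j = g_{ij}\cdot \lat_i$. (Incidentally, the paper's proof writes $g_{ij} = g_ig_j^{-1}$, a typo for the statement's $g_jg_i^{-1}$, since only the latter satisfies $g_{ij}g_iH = g_jH$.) Your second, orbit-map route is an equivalent and arguably cleaner phrasing of the same computation, bypassing injectivity of $\tau$ entirely.

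Where you genuinely diverge is on uniqueness, and there you are right and the paper's own argument is not. The paper concludes uniqueness of $g_{ij}$ from transitivity of the $G = \textsf{SO}(n)$ action on $\mathbf{S}^{n-1}$; this is a non sequitur, since transitivity delivers existence only, while uniqueness would require the action to be \emph{free}. For $n \ge 3$ it is not: the stabilizer of $\lat_i$ is a conjugate of $H = \textsf{SO}(n-1)$, of dimension $\binom{n-1}{2} > 0$, so the full solution set of $\lat_j = g\cdot \lat_i$ is the coset $g_{ij}\,\mathrm{Stab}(\lat_i)$, exactly as you observe (only for $n = 2$, where the action is simply transitive, does the literal claim hold). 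Your repair --- reading ``$\exists!$'' as uniqueness of the representative distinguished by $\tau$, i.e., uniqueness modulo $\mathrm{Stab}(\lat_i)$ --- is the correct one, and your remark that the loss in \eqref{loss_equv} only needs \emph{some} well-defined group element realizing the transport is apt. One further caution worth adding: $g_jg_i^{-1}$ is not well defined from the cosets alone (replacing $g_i \mapsto g_ih$ and $g_j \mapsto g_jh'$ changes it to $g_jh'h^{-1}g_i^{-1}$); it is pinned down only because the implemented $\tau$ outputs actual matrices in $\textsf{SO}(n)$, i.e., a choice of section of $G \to G/H$, which is what ``the corresponding cosets $g_iH$, $g_jH$'' must implicitly mean for the lemma to parse.
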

\begin{proof}
Given $g_iH = \tau(\lat_i)$ and $g_jH = \tau(\lat_j)$, we use $g_{ij} = g_ig_j^{-1} \in G$ such that, $g_jH = g_{ij}g_iH$. 

Now using the equivariance  fact~$(3)$ , we get,
\begin{align*}
	&g_jH = g_{ij}g_iH \\
	&\implies \tau(\lat_j) = g_{ij} \tau(\lat_i) \\
	&\implies \tau(\lat_j) = \tau(g_{ij}\cdot \lat_i)
\end{align*}

 Now as $\tau$ is an identification, i.e., a diffeomorphism, we get $\lat_j = g_{ij}\lat_i$. Note that  $\mathbf{S}^{n-1}$ is a Riemannian homogeneous space and the group $G$ acts transitively on $\mathbf{S}^{n-1}$, i.e., given $\mathbf{x}, \mathbf{y}\in \mathbf{S}^{n-1}$, $\exists g\in G$ such that, $\mathbf{y} = g\cdot \mathbf{x}$. Hence from $\lat_j = g_{ij}\lat_i$ and the transitivity property we can conclude that $g_{ij}$ is unique.
\end{proof}

\begin{theorem*}
	\label{arb_equiv}
	For a $\tau: \mathcal{L}\rightarrow G/H$ as defined above, and a mapping $b:\mathcal{L}\rightarrow \mathcal{Z}$, the function $\Phi: \mathcal{L}\rightarrow \mathcal{Z}$ defined by
	\begin{align}
		\label{eq_arb}
		\Phi(\lat) = \tau(\lat)\cdot b\left(\tau(\lat)^{-1}\cdot \lat\right) 
	\end{align}
	is $G$-equivariant, i.e., $\Phi(g\cdot \lat) = g\Phi(\lat)$.
\end{theorem*}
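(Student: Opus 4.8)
The plan is to prove the identity $\Phi(g\cdot\lat) = g\,\Phi(\lat)$ by a direct substitution, using the equivariance of $\tau$ (Fact~\ref{fact_eqv}) as the only nontrivial ingredient. First I would expand the left-hand side straight from the defining formula \eqref{eq_arb}, writing
$$\Phi(g\cdot\lat) = \tau(g\cdot\lat)\cdot b\left(\tau(g\cdot\lat)^{-1}\cdot (g\cdot\lat)\right).$$
Then I would apply Fact~\ref{fact_eqv}, namely $\tau(g\cdot\lat) = g\,\tau(\lat)$, and take inverses in $G$ to obtain $\tau(g\cdot\lat)^{-1} = \tau(\lat)^{-1}g^{-1}$. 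Substituting both expressions, the argument of $b$ collapses via $\tau(\lat)^{-1}g^{-1}g\cdot\lat = \tau(\lat)^{-1}\cdot\lat$, where the cancellation $g^{-1}g = e$ uses the group axioms and associativity of the action on $\mathcal{L}$.

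Having simplified the inner argument, the outer factor is $\tau(g\cdot\lat) = g\,\tau(\lat)$, so the whole expression becomes
$$\Phi(g\cdot\lat) = g\,\tau(\lat)\cdot b\left(\tau(\lat)^{-1}\cdot\lat\right) = g\,\Phi(\lat),$$
which is exactly the claimed $G$-equivariance. I would emphasize that the derivation uses no special property of $b$ at all, which is the whole point of the construction: equivariance is enforced structurally by the ``conjugation-style'' wrapping $\tau(\lat)\cdot b(\tau(\lat)^{-1}\cdot\,\,)$, so $b$ remains a free parameter available for the site-invariance objective in Stage two.

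The main subtlety to address is that $\tau$ takes values in the quotient $G/H$, so the symbols $\tau(\lat)^{-1}$ and left multiplication by $\tau(\lat)$ require care: a coset representative is determined only up to right multiplication by $H = \textsf{SO}(n-1)$. I would justify that the composite $\tau(\lat)^{-1}\cdot\lat$ is nonetheless well-defined as an element of $\mathcal{L}$, because $H$ is precisely the stabilizer of the basepoint identified with the coset $H$ in the homogeneous-space structure of Section~\ref{sec:review}; hence any two representatives of $\tau(\lat)$ send $\lat$ to the same point, and the representative ambiguity is absorbed. This is the only genuinely delicate point; once it is settled, the remaining manipulation is purely formal, resting on Fact~\ref{fact_eqv} and the transitive action of $G$ on $\mathbf{S}^{n-1}$ established earlier.
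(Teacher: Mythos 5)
Your proof is correct and follows essentially the same route as the paper's: expand $\Phi(g\cdot\lat)$ from \eqref{eq_arb}, apply Fact~\ref{fact_eqv} to get $\tau(g\cdot\lat)=g\,\tau(\lat)$ and $\tau(g\cdot\lat)^{-1}=\tau(\lat)^{-1}g^{-1}$, cancel $g^{-1}g$, and conclude. Your added remark on well-definedness of coset representatives (and writing $g\cdot\lat$ rather than $\lat$ inside $b$ when expanding, which the paper's proof elides as a typo) is a point of extra care the paper skips, but it does not change the argument.
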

\begin{proof}
Let $\lat\in \mathcal{L}$. Consider the $\Phi$ mapping of $g \cdot \lat$, that is  $\Phi(g\cdot \lat) = \tau(g\cdot \lat)\cdot b\left(\tau(g\cdot \lat)^{-1}\cdot \lat\right)$.

 Using the fact~$(3)$ from the main paper, we have $\tau(g\cdot \lat) = g\tau(\lat)$ and $\tau(g\cdot \lat)^{-1} = \tau(\lat)^{-1}g^{-1}$. Substituting these in $\Phi(g\cdot \lat)$, we get 
 \begin{align*}
 	\Phi(g\cdot \lat) &= g\tau( \lat)\cdot b\left(\tau(\lat)^{-1}g^{-1}g\cdot \lat\right)\\
 	&=g\tau(\lat)b\left(\tau(\lat)^{-1}\cdot \lat\right)\\
 	\text{Thus, } \Phi(g\cdot \lat) &= g\Phi(\lat)
 \end{align*}
\end{proof}


\begin{figure*}[!t]
	\begin{subfigure}[b]{0.45\linewidth}
		\includegraphics[width=\linewidth]{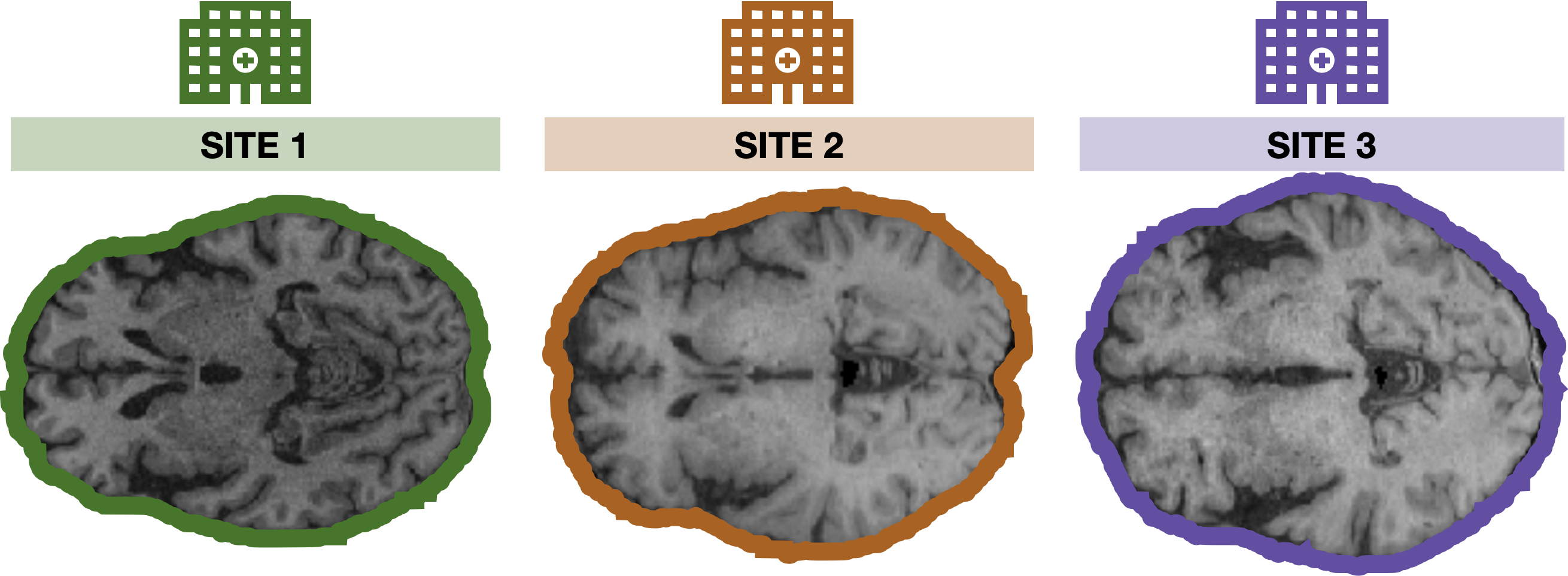}
		\caption{Variation due to scanner for particular age group.}
		\label{fig:adcp_confounds_site}
	\end{subfigure} \hspace{4mm}%
	\begin{subfigure}[b]{0.45\linewidth}
		\includegraphics[width=\linewidth]{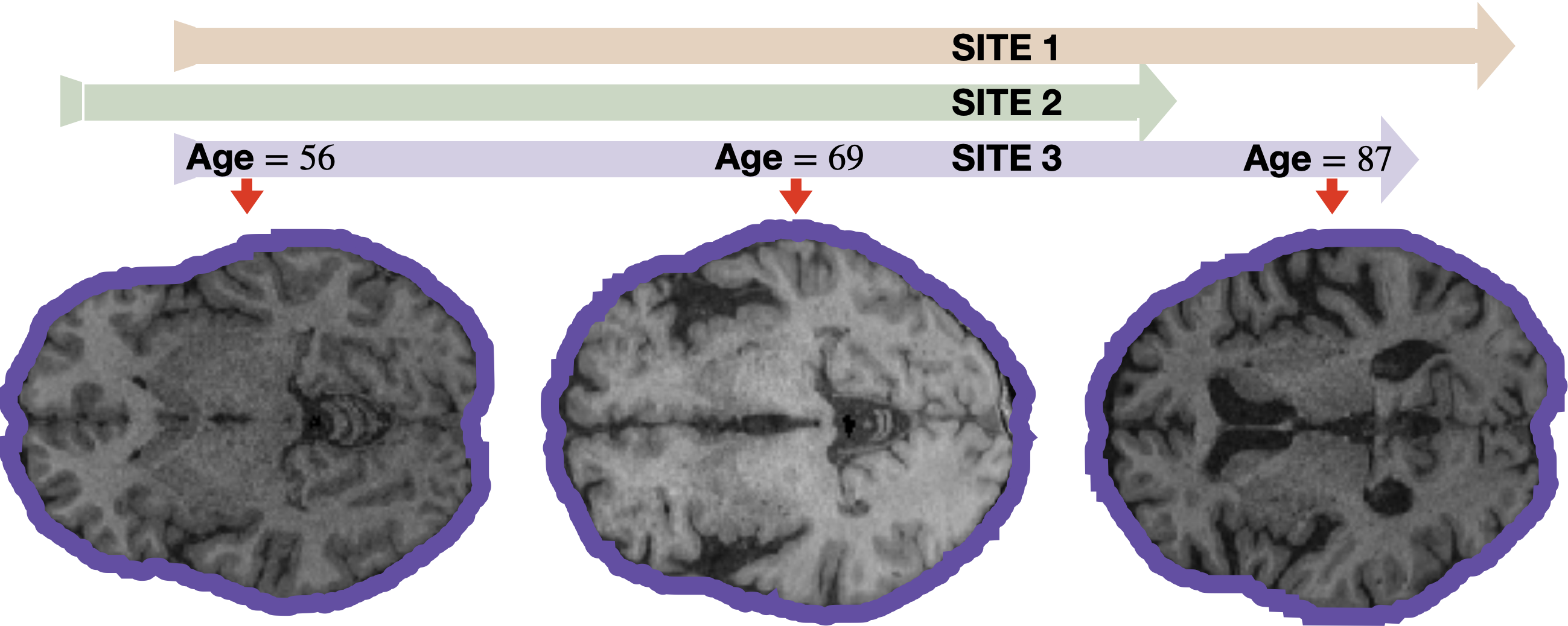}
		\caption{Variation due to covariates (age) in scanner $3$.}
		\label{fig:adcp_confounds_covariate}
	\end{subfigure}%
	\centering
	\caption{\footnotesize \label{fig:adcp_variations} {\bf Sample Images from ADCP dataset. }  \textbf{(a)} MRI images on control subjects from the ADCP dataset for different \textbf{sites} in the age group $70$-$80$.  \textbf{(b)} Images obtained from Site $3$ for three extreme \textbf{age groups}. The gantt chart on top of the image indicates the respective age range in the other sites.     }
	\vspace{-0.2in}
\end{figure*}

\subsection{Details on Evaluation Metrics}
\label{sec:app_evaluation}
Recall from Section~$4$ of the paper, our discussion on three metrics -- $\boldsymbol{\Delta_{Eq}}$, $\boldsymbol{Adv}$ and $\boldsymbol{\mathcal{M}}$. While $\boldsymbol{\Delta_{Eq}}$ and $\boldsymbol{\mathcal{M}}$ are variants of distance measure on the latent space, $\boldsymbol{Adv}$ assesses the ability to predict the nuisance attributes from the latent representation (and is therefore probabilistic in nature). Observe that $\boldsymbol{\Delta_{Eq}}$ and $\boldsymbol{\mathcal{M}}$ are (euclidean) distance measures and could be very different depending on the normalization of the vectors. For our purposes of evaluating these latent vectors/features in downstream tasks, 
we perform a simple feature normalization in order to obtain $0-1$   latent vectors given by,
\begin{align}
 	\tilde{z_i} = \frac{z_i  - \min(z_i)}{\max(z_i) - \min(z_i)}.\label{eq:z_norm}
\end{align}
Our feature normalization is composed of two steps: (i) centering -- the numerator in \eqref{eq:z_norm} ensures that the mean of $z$ (along its coordinates) is $0$; and (ii) scale -- the denominator projects the features $z$ on the sphere at origin with radius $\|z_i\|_{\infty}^{\geq}  = \max(z_i) - \min(z_i) \geq 0$. Note that our scaling step can be thought of as the usual projection in a special case: when $z_i$ is guaranteed to be nonnegative (for example, when $z_i$ represent activations), then $\|z_i\|_{\infty}^{\geq}$ simply corresponds  to a lower bound of the usual infinity norm, $\|z\|_{\infty}$ (hence projection on a scaled $\ell_{\infty}$ ball). We adopt this normalization only to compute $\boldsymbol{\Delta_{Eq}}$ and $\boldsymbol{\mathcal{M}}$ measures, and not for model training.

For computing the $\boldsymbol{Adv}$ measure, we follow \cite{NIPS2017_8cb22bdd} to train an adversarial neural network predicting the nuisance attributes. We use a three-layered fully connected network with batch normalization and train for $150$ epochs. \cite{NEURIPS2018_415185ea}  uses similar architecture for the adversaries with different hidden layers of $0, 1, 2, 3$. We found that a three-layer adversary is powerful enough to predict the nuisance attributes and hence we use it to report the  $\boldsymbol{Adv}$ measure.

\subsection{Understanding ADNI dataset}
\label{sec:app_adni}
\noindent\textbf{Dataset.} The data was downloaded from the Alzheimers Disease Neuroimaging Initiative (ADNI) database (adni.loni.usc.edu). The ADNI was launched in 2003 as a public-private partnership, led by Principal Investigator Michael W. Weiner, MD. ADNI was set up with an objective to measure the progression of mild cognitive impairment (MCI) and early Alzheimers disease (AD) using serial magnetic resonance imaging (MRI), positron emission tomography (PET), other biological markers. We have three imaging protocol (scanner) types in the dataset, namely, GE, Siemens and Phillips. The count of samples AD/CN in each of these imaging protocols are provided in Table \ref{tab:adni}. An example illustration (borrowed from \cite{aisen2017path}) of using different scanner on the images is shown in Figure \ref{fig:adni_img}.

\begin{table}[!b]
	\caption{Sample counts for ADNI dataset}
	\label{tab:adni}
	\centering
	\begin{tabular}{lll}
		\toprule
		Imaging Protocol     & AD     & CN \\
		\midrule
		Manufacturer=GE Medical Systems 	 & $44$  & $78$     \\
		Manufacturer=Philips Medical Systems    & $32$ & $50$      \\
		Manufacturer=Siemens    & $83$  & $162$  \\
		\bottomrule
	\end{tabular}
\end{table}

\begin{figure}
	\centering
	\begin{subfigure}[t]{0.4\columnwidth}
		\centering
		\includegraphics[width=0.9\linewidth]{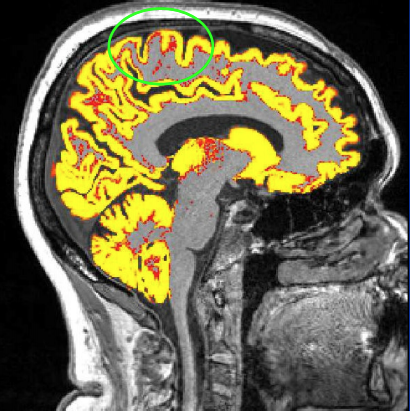}
		\caption{GE}
	\end{subfigure} \hspace{0.1in} %
	\begin{subfigure}[t]{0.4\columnwidth}
		\centering
		\includegraphics[width=0.9\linewidth]{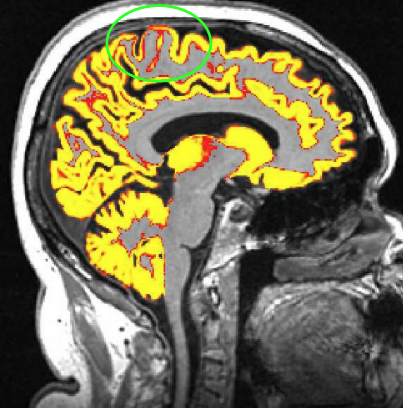}
		\caption{Siemens}
	\end{subfigure}
	\caption{\footnotesize {\bf Scanner effects on images.} Two  imaging protocols are shown: (a) Siemens, (b) GE. The yellow region is the cortical ribbon segmentation, and the green circle shows that the imaging protocol from different manufacturers have an effect on the scan. Image borrowed from \cite{aisen2017path}. }
	\label{fig:adni_img}
	\vspace{-0.2in}
\end{figure}
\noindent\textbf{Preprocessing.} All images were first normalized and skull-stripped using Freesurfer \cite{fischl2012freesurfer}. A linear (affine) registration was performed to register each image to MNI template space.

\subsection{Understanding ADCP dataset}
\label{sec:app_adcp}
\noindent\textbf{Participants.}  The data for ADCP was collected through an NIH-sponsored Alzheimer's Disease Connectome Project (ADCP) U01 AG051216. The study inclusion criteria for AD (Alzheimer's disease) / MCI (Mild Cognitive Impairment) patients consisted of age between  $55$-$90$ years, willing and able to undergo all procedures, retains decisional capacity at initial visit, meets criteria for probable AD or meets criteria for MCI. 

\noindent\textbf{Scanners.}  MRI images were acquired at three distinct sites on GE scanners. T1-weighted structural images were acquired using a 3D gradient-echo pulse sequence (repetition time (TR) = $604$ ms, echo time (TE) = $2.516$ ms, inversion time = $1060$ ms, flip angle = $8^\text{o}$, field of view (FOV) = $25.6$ cm, $0.8$ mm isotropic). T2-weighted structural images were acquired using a 3D fast spin-echo sequence (TR = $2500$ ms, TE = $94.398$ ms, flip angle = $90^\text{o}$, FOV = $25.6$ cm, $0.8$ mm isotropic). 

\noindent\textbf{Preprocessing.} The Human Connectome Project (HCP) minimal preprocessing pipeline version $3.4.0$ \cite{glasser2013minimal} was followed for data processing. This pipeline is based on FMRIB Software Library \cite{jenkinson2012fsl}. Next, the T1w and T2w images are aligned, a B1 (bias field) correction is performed, and the subject’s image in native structural volume space is registered to MNI space using FSL’s FNIRT \cite{andersson2007non}. Only T1w images in the MNI space were used for further analysis and experiments. 

\noindent\textbf{Data Statistics.} We plot the distributions of several attributes in this dataset conditioned on the site. In Figure~\ref{fig:demo1}, we show that the values of age and cognitive scores differ across the three sites in this dataset. Cognitive scores are computed based on an test assigned to the patients. Higher scores indicate higher cognitive operation in the patient. Table~\ref{tab:demo2} shows the sample counts for target variable of prediction AD (Alzheimer's disease) and Control group.

\begin{figure*}[!t]
	\begin{subfigure}[b]{0.4\linewidth}
		\includegraphics[width=\linewidth]{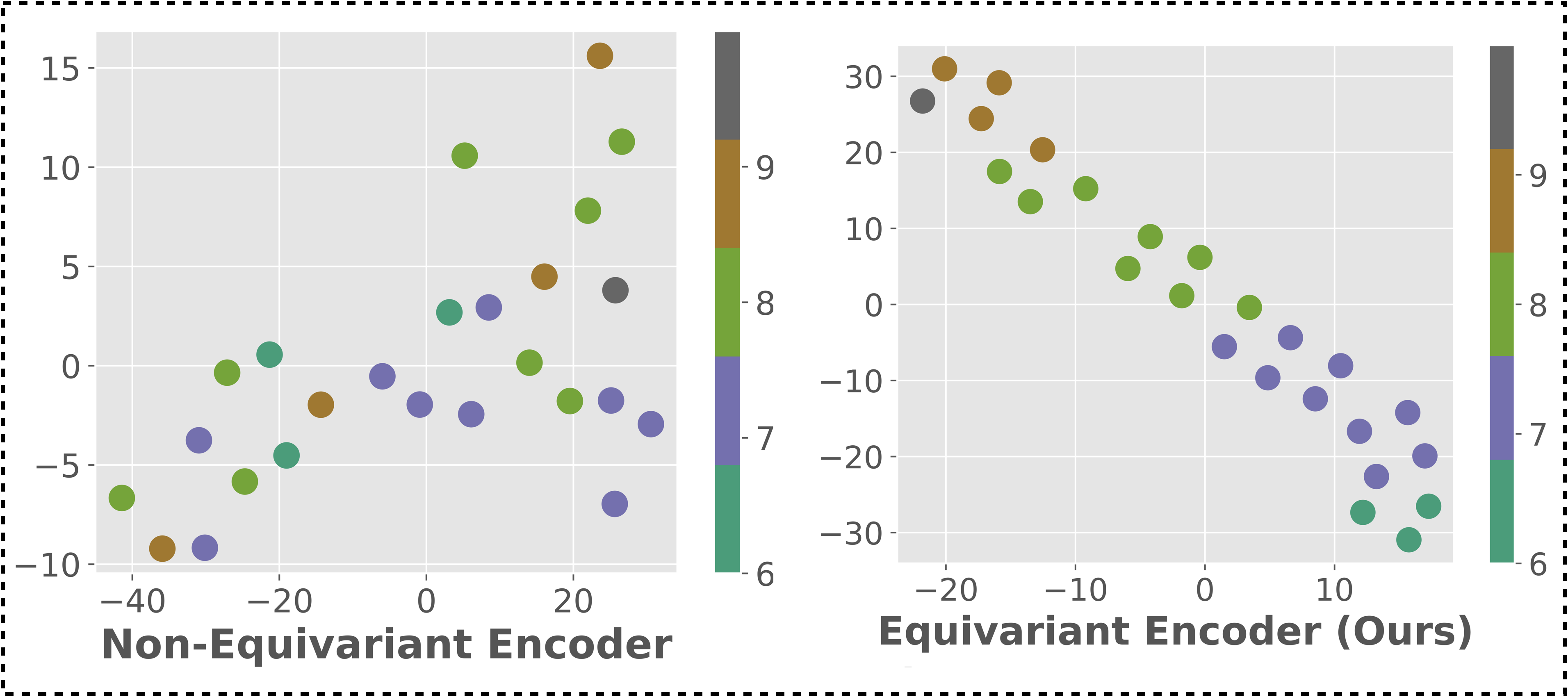}
		\caption{ADCP Dataset}
	\end{subfigure} \hspace{1cm}
	\begin{subfigure}[b]{0.4\linewidth}
		\includegraphics[width=\linewidth]{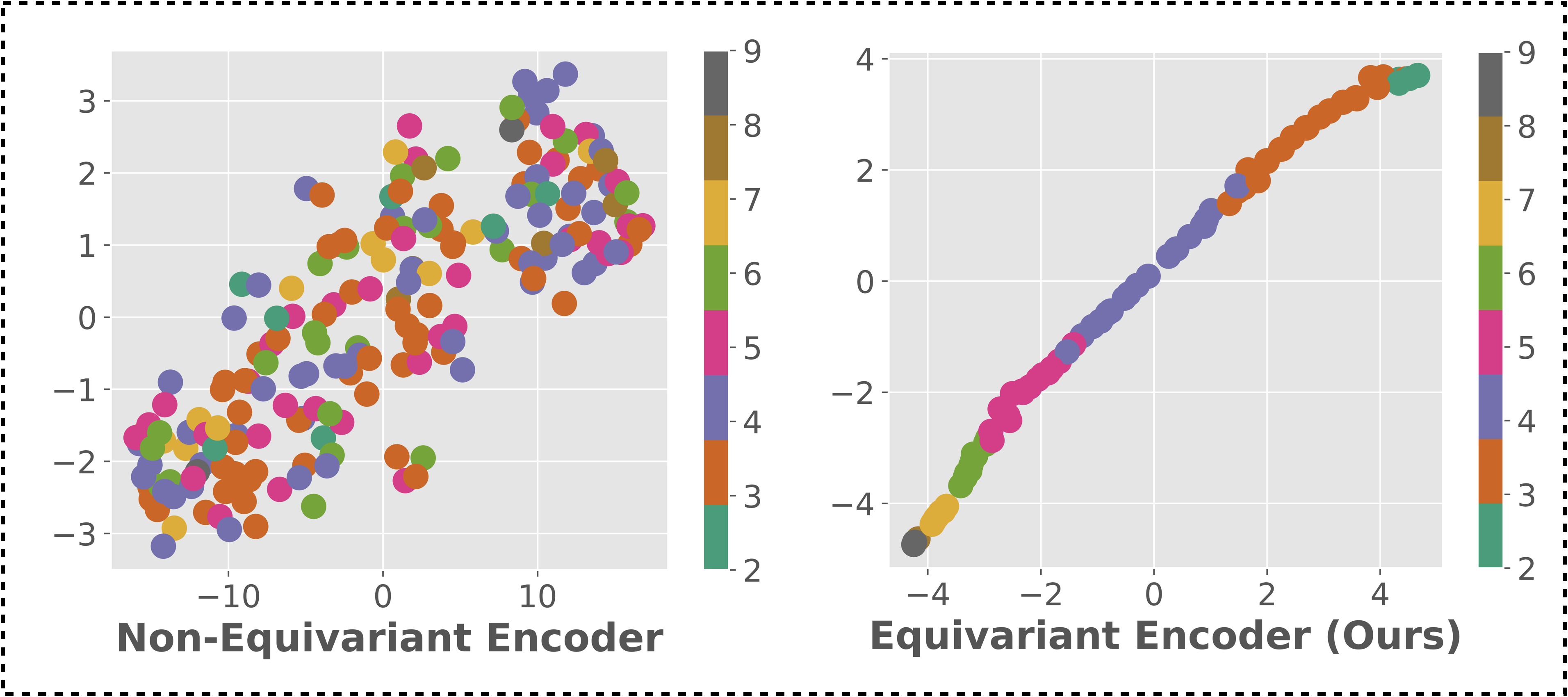}
		\caption{German Dataset}
	\end{subfigure}%
	\centering
	\caption{\footnotesize {\bf t-SNE plots of latent representations of $\tau(\lat)$ }. For both ADCP \textbf{(left)} and German \textbf{(right)}, the  the latent vectors of the equivariant encoder are evenly distributed with respect to the age covariate value. The non-equivariant space is generated from the na\"ive pooling model. Different colors denote the discretized set of \textbf{age} covariate value present in the data.}
	\label{fig:supp_tsne_comparison}
\end{figure*}

\begin{figure}[!t]
	\begin{subfigure}[b]{0.49\columnwidth}
		\includegraphics[width=\linewidth]{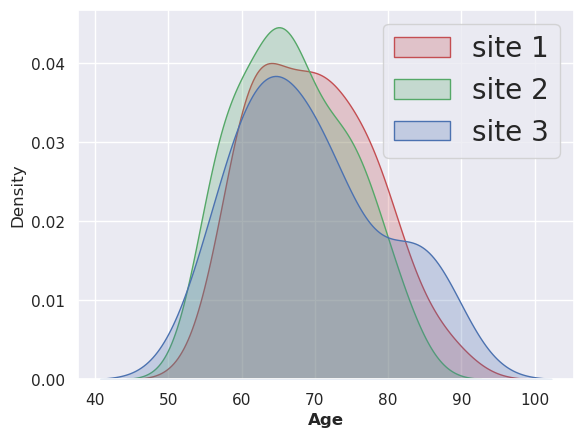}
		\caption{Age}
	\end{subfigure} 
	\begin{subfigure}[b]{0.49\columnwidth}
		\includegraphics[width=\linewidth]{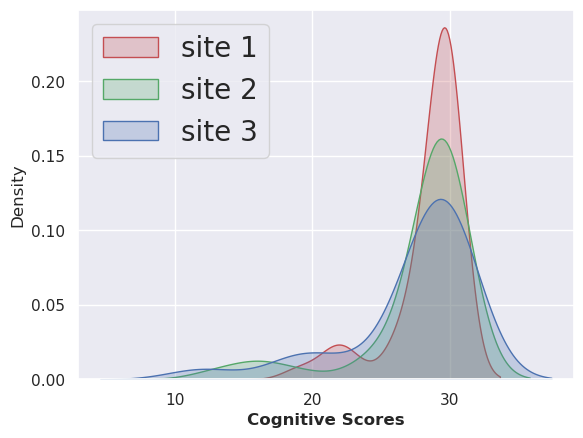}
		\caption{Cognitive Score}
	\end{subfigure}%
	\centering
	\caption{ \footnotesize {\bf Distribution of attributes in the ADCP dataset. } On the \textbf{left} we observe the distribution of age for the three different sites present in the ADCP dataset. On the \textbf{right}, we see the distribution of the cognitive scores. The cognitive scores are computed based on a test that assesses executive function. Higher scores indicate higher level of cognitive flexibility. Both age and cognitive scores are observed to vary across the sites. }
	\label{fig:demo1}
\end{figure}

\begin{table}[!b]
	\caption{Sample counts for ADCP dataset}
	\label{tab:demo2}
	\centering
	\begin{tabular}{llllll}
		\toprule
		& AD     & Control & & Female     & Male \\
		\midrule
		site $1$ 	 & $10$  & $39$ & & $29$  & $20$       \\
		site $2$     & $10$ & $33$   & & $30$  & $13$    \\
		site $3$     & $5$  & $19$  & & $14$  & $10$ \\
		\bottomrule
	\end{tabular}
\end{table}

\subsection{Visualizing the latent space}
\label{sec:app_visualize}
In the paper Figure~$4$, we have seen the latent space $\tau(\lat)$ for the samples in the ADNI and the Adult datasets. Here, we will see similar qualitative results for the German and the ADCP dataset in Figure~\ref{fig:supp_tsne_comparison} of the supplement. In the plots, the latent representations for a non-equivariant encoder are stretched thoughout the latent space. In contrast, the representations of an equivariant encoder, for a discretized value of \textbf{Age}, are localized to specific regions. Further, these representations have a monotonic behaviour with respect to the values of \textbf{Age}. 
\noindent\begin{minipage}{.45\textwidth}
	\begin{lstlisting}[caption=Residual Block,frame=tlrb]{Name}
		BatchNorm3d
		Swish
		Conv3d
		BatchNorm3d
		Swish
		Conv3d
	\end{lstlisting}
\end{minipage}\hfill
\begin{minipage}{.45\textwidth}
	\begin{lstlisting}[caption=Fully Connected Block,frame=tlrb]{Name}
		AdaptiveAvgPool3d
		Flatten
		Dropout
		Linear
		BatchNorm1d
		Swish
		Dropout
		Linear
	\end{lstlisting}
\end{minipage}
\subsection{Hyper-parameters and NN Architectures}
\label{sec:app_nn}
For tabular datasets such as German and Adult, our encoders and decoders comprise of fully connected networks and a hidden layer of $64$ nodes. The dimension of the quotient latent space $\tau(\lat_i)$ is $30$. Adam is used as a default optimizer and the learning rate is adjusted based on the validation set. 

Imaging datasets like ADNI and ADCP require 3D convolutions and a ResNet architecture as the backbone. The last layer is used to describe the quotient space $\tau(\lat_i)$. We present the residual and the fully connected block below. Detailed architectures can be viewed in the code. 

\subsection{Scaling factors}
\label{sec:app_scale}
Recall from the Algorithm~$1$ of the main paper that our loss function for each stage comprises of reconstruction and prediction losses in addition to the objectives concerning equivariance and invariance. These multi-objective loss functions require scaling factors that upweight one objective over the other. These scaling factors group up as hyper-parameters for the Algorithm. In our experiments, it was observed that the results were robust to a range of scaling factor choices. For the results reported in Table~$1$ of the paper, they were identified through cross-validation. Here we provide an example for the scaling factors used for the Adult dataset, please refer to the bash scripts available in the code for the scaling factors of other datasets.
\begin{itemize}
	\item \textbf{Stage one: Equivariance to Covariates}
	\begin{itemize}
		\item Equivariance Loss $L_{\text{stage} 1}$ \\ Scaling Factor $:1.0$
		\item Reconstruction Loss $\sum_i \|X_i - \mathfrak{D}(\mathfrak{E}(X_i)) \|$\\ Scaling Factor $:0.02$
	\end{itemize}
	\item \textbf{Stage two: Invariance to Site}
		\begin{itemize}
		\item Invariance Loss $\mathcal{MMD}$\\ Scaling Factor $: 0.1$
		\item Prediction Loss $\|Y-h(\Phi(\lat))\|^2$\\ Scaling Factor $: 1.0$
		\item Reconstruction Loss $\|\lat - \Psi(\Phi(\lat))\|^2$\\ Scaling Factor $: 0.1$
	\end{itemize}
\end{itemize}
We refer the reader to Algorithm~$1$ and Section~$3$ of the main paper for the details on the notations used above.

\end{document}